\title[]{Coordination of Multiple Robots along Given Paths with Bounded Junction Complexity}
\author{Mikkel Abrahamsen}
\affiliation{
  \institution{University of Copenhagen, Denmark}
  \country{}%
  }
\email{miab@di.ku.dk}
\author{Tzvika Geft}
\affiliation{
  \institution{Tel Aviv University, Israel}
  \country{}%
  }
\email{zvigreg@mail.tau.ac.il}
\author{Dan Halperin}
\affiliation{
  \institution{Tel Aviv University, Israel}
  \country{}%
  }
\email{danha@tauex.tau.ac.il}
\author{Barak Ugav}
\affiliation{
  \institution{Tel Aviv University, Israel}
  \country{}%
  }
\email{barakugav@gmail.com}
\begin{abstract}
We study a fundamental NP-hard motion coordination problem for multi-robot/multi-agent systems:
We are given a graph $G$ and set of agents, where each agent has a given directed path in $G$.
Each agent is initially located on the first vertex of its path.
At each time step an agent can move to the next vertex on its path, provided that the vertex is not occupied by another agent.
The goal is to find a sequence of such moves along the given paths so that each reaches its target, or to report that no such sequence exists.
The problem models guidepath-based transport systems,
which is a pertinent abstraction for traffic in a variety of contemporary applications, ranging from train networks or Automated Guided Vehicles (AGVs) in factories, through computer game animations, to qubit transport in quantum computing.
It also arises as a sub-problem in the more general multi-robot motion-planning problem.

We provide a fine-grained tractability analysis of the problem by considering new assumptions and identifying minimal values of key parameters for which the problem remains NP-hard.
Our analysis identifies a critical parameter called \emph{vertex multiplicity} (VM), defined as the maximum number of paths passing through the same vertex.
We show that a prevalent variant of the problem, which is equivalent to Sequential Resource Allocation (concerning deadlock prevention for concurrent processes), is NP-hard even when VM is 3.
On the positive side, for VM $\le$ 2 we give an efficient algorithm that iteratively resolves cycles of blocking relations among agents.
We also present a variant that is NP-hard when the VM is 2 even when $G$ is a 2D grid and each path lies in a single grid row or column.
By studying highly distilled yet NP-hard variants, we deepen the understanding of what makes the problem intractable and thereby guide the search for efficient solutions under practical assumptions.
\end{abstract}
\keywords{multi-robot motion planning; multi-agent path finding; predefined paths; sequential resource allocation; scheduling; complexity}
\newcommand{\BibTeX}{\rm B\kern-.05em{\sc i\kern-.025em b}\kern-.08em\TeX}
\renewenvironment{quote}
  {\list{}{\rightmargin=0.5cm \leftmargin=0.5cm}%
   \item\relax}
  {\endlist}
\titleformat*{\subsubsection}{\LARGE\filcenter\bfseries}
\tikzset{my loop/.style = {to path={
  \pgfextra{}
  [looseness=12,min distance=10mm]
  \tikz@to@curve@path},font=\sffamily\small
  }}
\newcommand*{\boldcheckmark}{%
  \textpdfrender{
    TextRenderingMode=FillStroke,
    LineWidth=.5pt, %
  }{\checkmark}%
}
\newcommand{\commentblock}[1]{}
\DeclareMathOperator{\nextb}{\textit{next}}
\DeclareMathOperator{\prevb}{\textit{prev}}
\DeclareMathOperator{\head}{\textit{head}}
\DeclareMathOperator{\tail}{\textit{tail}}
\newcommand{\VM}{\mathop{\mathsf{VM}}}
\begin{document}

\newcommand{\poslit}{\ensuremath{R^+}}
\newcommand{\neglit}{\ensuremath{R^-}}
\newcommand{\true}{\textsc{true}}
\newcommand{\false}{\textsc{false}}
\newcommand{\litrob}{\ensuremath{r^j}}
\newcommand{\src}[1] {\ensuremath{s(#1)}}
\newcommand{\trg}[1] {\ensuremath{t(#1)}}
\newcommand{\pth}[1] {\ensuremath{\pi_{#1}}}
\newcommand{\pthdef}[2] {\ensuremath{\pi_{#1}(#2)}}
\newcommand{\gadg}{\ensuremath{\Xi}}
\newcommand{\spac}{\ensuremath{\delta}-spacious}

\newcommand{\din}{\delta_\textrm{in}}
\newcommand{\dout}{\delta_\textrm{out}}

\newcommand{\bef}{B}
\newcommand{\aft}{A}

\let\ov\overline
\newcommand{\clause}[3]{\ensuremath{(#1 \lor #2 \lor #3)}}
\newcommand{\lits}{\ensuremath{\ell_i, \ell_j, \ell_k}}
\newcommand{\cl}{\clause{\ell_i}{\ell_j}{\ell_k}}

\newcommand{\bestcost}{\mathop{d^*}}

\newcommand{\set}[1]{\ensuremath{\{#1\}}}

\definecolor{gray}{rgb}{0.35,0.35,0.35}
\definecolor{blue}{rgb}{0,0,1}
\definecolor{red}{rgb}{1,0,0}
\definecolor{orange}{rgb}{0.75, 0.4, 0}
\definecolor{green}{rgb}{0.0, 0.5, 0.0}
\newcommand{\aviv}[1]{{\color{green}\textbf{Aviv: }\sf#1}}
\newcommand{\tzvika}[1]{{\color{blue}\textbf{Tzvika: }\sf#1}}
\newcommand{\ken}[1]{{\color{orange}\textbf{Ken: }\sf#1}}
\newcommand{\michal}[1]{{\color{gray}\textbf{Michal: }\sf#1}}

\def\P{\mathcal{P}} \def\C{\mathcal{C}} \def\H{\mathcal{H}}
\def\F{\mathcal{F}} \def\U{\mathcal{U}} \def\L{\mathcal{L}}
\def\O{\mathcal{O}} \def\I{\mathcal{I}} \def\S{\mathcal{S}}
\def\G{\mathcal{G}} \def\Q{\mathcal{Q}} \def\I{\mathcal{I}}
\def\T{\mathcal{T}} \def\L{\mathcal{L}} \def\N{\mathcal{N}}
\def\V{\mathcal{V}} \def\B{\mathcal{B}} \def\D{\mathcal{D}}
\def\W{\mathcal{W}} \def\R{\mathcal{R}} \def\M{\mathcal{M}}
\def\X{\mathcal{X}} \def\A{\mathcal{A}} \def\Y{\mathcal{Y}}
\def\L{\mathcal{L}}

\def\dS{\mathbb{S}} \def\dT{\mathbb{T}} \def\dC{\mathbb{C}}
\def\dG{\mathbb{G}} \def\dD{\mathbb{D}} \def\dV{\mathbb{V}}
\def\dH{\mathbb{H}} \def\dN{\mathbb{N}} \def\dE{\mathbb{E}}
\def\dR{\mathbb{R}} \def\dM{\mathbb{M}} \def\dm{\mathbb{m}}
\def\dB{\mathbb{B}} \def\dI{\mathbb{I}} \def\dM{\mathbb{M}}
\def\dZ{\mathbb{Z}}

\pagestyle{fancy}
\fancyhead{}

\maketitle

\section{Introduction}
We study the problem of coordinating the motion of a fleet of robots/agents\footnote{We interchangeably use the terms \emph{agents} and \emph{robots} in this work. In the terminology of the motion-planning literature, robots are typically used when moving in continuous domains, and agents (or pebbles, among others) are used for motion on graphs. The distinction is sometimes blurred since often continuous motion-planning problems are reduced to motion planning of agents on graphs.} with assigned paths.

The problem arises in the context of guidepath-based vehicles, such as
Automated Guided Vehicles and overhead monorail systems
that are used in industrial environments~\cite{DBLP:journals/tac/ReveliotisR10}.
Such environments are typically highly structured and constrain the vehicles to move along predefined paths in a centrally controlled manner.
A crucial component of such systems is ensuring \emph{liveness}, which is the ability of the vehicles to complete their assigned tasks and perform similar future tasks.
Liveness can be lost due to \textit{deadlocks}, which can arise due to the fact that certain vehicle motions are irreversible, for example, when a vehicle cannot move backward on a railway.
Before entering a new state, such as giving a vehicle a new assigned path, it is desirable to check if liveness can be preserved.
In general, this check amounts to determining the existence of a sequence of motions that allows all agents to complete their current trips, which in turn boils down to solving our problem.
This problem of checking state liveness is known as liveness-enforcing supervision and has received interest from the Discrete Event Systems (DES) community~\cite{DBLP:conf/med/ReveliotisM19}.

Another motivation comes from the world of smart transportation.
Recent years have demonstrated that operating autonomous vehicles in mixed traffic/urban areas remains highly challenging and is therefore unlikely to prevail soon.
A more viable setting for operating them safely is using dedicated infrastructure (e.g., guideways, rails, or dedicated lanes), 
which is simpler due to limited interaction with human drivers, pedestrians, and obstructions.
Unlike regular road networks, such infrastructures constrain vehicles to move on a limited set of paths, which ultimately lends itself to our setting.
Such constrained autonomous systems are expected to evolve beyond simple topologies and fixed schedules (e.g., in airport shuttles)~\cite{DBLP:journals/networks/KaspiRU22a} and hence demand more complex motion coordination.
For example, to cater for increased demand, which will also be flexible (e.g., as a result of on-demand service), a system's efficiency might be increased by allowing vehicles heading in opposite directions to use the same path segments.
Indeed, developing algorithms for the special structure of dedicated infrastructure transport systems has been recently highlighted as a research direction in urban mobility
and logistics~\cite{DBLP:journals/networks/KaspiRU22a}.

Our problem belongs in the wider context of \emph{Multi-Robot Motion Planning} (MRMP). %
In MRMP, instead of having the whole path specified, we are given only the start and target for each robot/agent, and the goal is to find a collision-free motion that brings all the robots to their targets. %
The general problem has been shown to be PSPACE-hard in various planar settings~\cite{hopcroft1984complexity, DBLP:journals/tcs/HearnD05, DBLP:journals/ijrr/SoloveyH16, DBLP:conf/fun/BrockenHKLS21}.
The problem is also NP-hard even when all the robots move one by one~\cite{geft2021tractability}.
Relaxations involving assumptions on the spacing between robots in
their start and target placements have been introduced in order to make the problem tractable~\cite{DBLP:journals/tase/AdlerBHS15, SolomonHalperin2018, DBLP:conf/rss/SoloveyYZH15}. %
The discrete counterpart of MRMP, where agents move on a graph, is solvable in polynomial time~\cite{DBLP:conf/focs/KornhauserMS84, DBLP:conf/wafr/YuR14}.
However, when optimal solutions  are sought, e.g., with respect to a time or distance objective, the problem becomes NP-hard, even on 2D grid graphs~\cite{DBLP:journals/ral/BanfiBA17, DBLP:journals/siamcomp/DemaineFKMS19, DBLP:conf/atal/GeftH22}.
This intractability has been recently tackled by approximation algorithms~\cite{DBLP:journals/siamcomp/DemaineFKMS19, DBLP:journals/arobots/Yu20, DBLP:journals/ral/WangR20}.
We remark that there is a tremendous body of work on MRMP variants, for which it is impossible to do justice here.%

Although MRMP has more freedom than our problem due to not constraining paths a priori, MRMP with given paths (MRMP-GP for short)
is useful as a subroutine for solving MRMP.
A common paradigm for solving MRMP, known as \emph{decoupled planning}, is to first plan the path of each robot without taking other robots into account. Then, some form of coordination between robots given their individual paths follows, such as adjusting the robots' speeds along paths.
Early works focused on variants of the problem for a constant number of robots~\cite{kant1986toward}.
In~\cite{DBLP:conf/icra/ODonnellL89}, the \textit{coordination diagram}, which represents placements along each robot's path at which mutual collisions might occur, was used for two robots.
The diagram has later been generalized for multiple robots~\cite{DBLP:conf/icra/LaValleH96} and has since been used to coordinate their motion along assigned paths~\cite{DBLP:journals/trob/SimeonLL02, olmi2010coordination}.
However, the size of the diagram has a worst-case exponential dependency on the number of robots.
A different flavor of works focuses on the \textit{execution} of paths (more precisely, trajectories) that have already been planned.
In practice, it is hard to guarantee that a robot will not deviate from a planned trajectory.
As robots become unexpectedly delayed, such as due to external interference, they might no longer be able to follow the initially valid plan.
Therefore, there is a perpetual need to verify and coordinate the motion of robots while they move along planned paths (possibly replanning them).
Some works of this flavor include~\cite{DBLP:conf/iros/CapGF16, DBLP:conf/flairs/CoskunOV21, DBLP:journals/corr/abs-2010-05254,DBLP:journals/ral/HonigKTDA19}.

Despite sustained interest in the problem, the first NP-hardness results explaining the lack of efficient algorithms appeared, to our knowledge, only relatively recently~\cite{DBLP:journals/tac/ReveliotisR10, DBLP:conf/med/ReveliotisM19}.
These results have been presented as part of a line of work on deadlock prevention in resource allocation problems~\cite{reveliotis2006real}, affiliated with the DES community, which have wide applicability in various automation scenarios~\cite{DBLP:journals/ram/Reveliotis15}.
Such problems involve allocating a finite set of reusable
resources to a set of concurrently executing processes, where each process can only be executed by acquiring resources in a certain order.
Our problem can be seen as being equivalent to the simplest class of resource allocation problems, known as Linear Single-Unit Resource Allocation Systems (L-SU-RAS), which is NP-hard~\cite{DBLP:journals/tac/ReveliotisR10, reveliotis2006real}.
In the terms of L-SU-RAS, each robot emulates a process that needs to "acquire" one location at a time from a sequence of locations, each of which can host only a single robot at a time.

We observe the following undesirable properties of previous hardness constructions:
(i) Unbounded vertex multiplicity, i.e., there is a "congested" location that has to be visited by an unbounded number of robots.
(ii) There are path segments that have to be traversed in opposite directions, i.e., an inherent potential for a head-on collision exists.
(iii) The robot's paths are not the shortest between their endpoints. In particular, robots have to visit the same vertex multiple times along their path (in proofs for the planar case).
We summarize the previous hardness results in $\Cref{tab:comp}$.
Arguably, these properties do not necessarily represent real systems, which prompts new analysis for cases where they do not hold.

\paragraph{Contribution.}
In this work, we perform a fine-grained complexity study of the problem of coordinating the motion of robots along fixed paths.
We consider two parameters: vertex multiplicity and the path shape complexity, which is the maximum number of turns made on the grid (exact definitions are provided in \Cref{sec:prob_def}).
We consider two main problem variants: (1) a prominent variant that corresponds to the L-SU-RAS resource allocation problem, and (2) a lesser-studied variant where the aforementioned property (ii) of opposite direction paths does not hold. For each variant, we identify the critical values of the parameters for which the problem remains NP-hard, such that below this value the problem is efficiently solvable.
By that, we establish a sharper boundary between negative and positive results.

Our main positive result is for variant (1) where we present an efficient algorithm solving the problem for any graph such that the vertex multiplicity is 2.
By that, we expand the class of efficiently solvable instances.
At the heart of our solution is an iterative procedure for resolving cycles
of blocking relations among agents. For each such cycle, we construct a special type of  a directed graph, which we call a \emph{graph composed of paths}. We repeatedly contract this graph until it reaches a problem-equivalent simplified and irreducible state in which it is easy to determine whether the blocking can be untangled or otherwise a deadlock is detected and the instance has no solution. The algorithm runs in time linear in the total lengths of the given paths.

On the negative side, we show that for a vertex multiplicity of~3, the problem is NP-hard.
Furthermore, for variant (2) we show that the problem is NP-hard for agents moving along straight paths on a 2D grid graph where the vertex multiplicity is 2, which is considerably more restricted than previous hardness results.

Our fine-grained complexity analysis echoes recent calls for deepening the understanding of what makes multi-agent motion coordination problems hard~\cite{DBLP:conf/atal/SalzmanS20, gordon2021revisiting, DBLP:conf/atal/EwingRKSA22}.
Such an improved understanding can guide the search towards improved algorithms and new efficiently solvable problem variants.
We discuss potential developments in this spirit in the conclusion.

\section{Problem Definition and Assumptions}
\label{sec:prob_def}
In this section, we formally define the problem that we study, along with the assumptions and parameters we consider.

\begin{table}
\setlength{\tabcolsep}{2.5pt}
\caption{Comparison to previous hardness results.
For exact definitions of problem variants and parameters see~\Cref{sec:prob_def}.
A check mark in the last column ("Shortest paths?") indicates that each given path in the construction is the shortest between its start and target vertices.
We use "min" to indicate that a parameter is the minimum value below which the problem variant is efficiently solvable.}
\label{tab:comp}
\centering
\begin{tabular}{lcccccc}
\midrule
\begin{tabular}[c]{@{}c@{}}Problem\\ variant \end{tabular} &
Paper& \begin{tabular}[c]{@{}c@{}}Graph\\ type\end{tabular} & \small \begin{tabular}[c]{@{}c@{}}Vertex\\ multiplicity \end{tabular}   & 
\small \begin{tabular}[c]{@{}c@{}}Max. \# turns\\ in path \end{tabular}  &  \small \begin{tabular}[c] {@{}c@{}}Shortest\\paths?\end{tabular} \\

 \midrule
 \multirow{5}{*}{NBT} & 
 \cite{lawley2001deadlock}     &  general & \small unbounded  &  n/a & \checkmark  \\
 & \cite{DBLP:conf/med/ReveliotisM19}  &  planar  &  \small unbounded &  n/a &             \\
 & \cite{DBLP:journals/tac/ReveliotisR10}  &  2D grid    & \small unbounded  &  >20 &           \\
  \cmidrule{2-6} %
 & \multirow{2}{*}{\textbf{ours}} & 
     \textbf{general}  & \textbf{ 3 (min) }  & \textbf{n/a}     & \boldcheckmark  \\
  &  & \textbf{2D grid}   & \textbf{ 4 }      & \textbf{1 (min) } &  \boldcheckmark   \\

\midrule
UNI   & \textbf{ours}& \textbf{2D grid} & \textbf{ 2 (min) }  & \textbf{ 0 (min) }& \boldcheckmark \\
     
\end{tabular}
\end{table}

\paragraph{Multi-robot motion planning with given paths (MRMP-GP).}
We are given a set $R$ of $n$ robots that operate in a workspace $W$, which is a finite undirected graph.
The vertices of $W$, denoted by $V(W)$, may also be called \emph{positions}.
Each robot $r \in R$ has a start vertex $\src{r} \in V(W)$ (also referred to as a \emph{source}) and a target vertex $\trg{r} \in V(W)$, and a path $\pi(r)=v_1,\ldots,v_\ell$, where $v_1=\src{r}, v_\ell=\trg{r}$ and $v_i v_{i+1}$
is an edge of $W$ for all $1 \le i < \ell$.
Note that since $\pi(r)$ is a path, each vertex only appears once in $\pi(r)$.

Each robot is initially located at its start vertex. MRMP-GP asks to find a \emph{motion plan} or \emph{solution}, which is a sequence of moves that bring each robot from its source to its target using its given path without inducing collisions with other robots.
A legal move consists of a single robot moving to the next vertex $v$ on its path, provided that no other robot is located at $v$.
No backward moves are allowed, i.e., a robot may not move to the previous vertex on its path.
In particular, once a robot reaches its target it stays there.

\Cref{fig:example_instance} shows an example of an instance.
Note that MRMP-GP is a feasibility problem and not an optimization problem; hence, we only move one robot at a time in a solution.

\paragraph{Variants.} Now we define the two variants of the problem we consider.
Let $r$ and $r'$ be two different robots.
We say that $\trg{r'}$ is a \textit{blocking target} of $r$ if $\trg{r'}$ lies on $\pi(r)$.
An instance is said to have the \textit{non-blocking targets} property if it does not have blocking targets.
This case corresponds to the L-SU-RAS resource allocation problem~\cite{DBLP:journals/automatica/Reveliotis20} because in this problem when a process completes it is essentially gone from the world. This is equivalent to a robot whose target is not a blocking target of any other robot since we can consider the robot as disappearing once it reaches its target.

For our second variant, an edge $(u,v)$ in $W$ is called \emph{bi-directional} if it has to be traversed in both directions by different robots, i.e., there are two robots $r$ and $r'$ where $(u,v)$ appears on $\pi(r)$ and $(v,u)$ appears on $\pi(r')$.
An instance is said to have \textit{one-way} or \textit{unidirectional} motion if it has no bi-directional edges.

\paragraph{Parameters.}
For a vertex $v \in V(W)$, we denote by $N(v)$ the number of paths in which it appears, i.e.,  $N(v) \coloneqq |\set{r \mid v \in \pi(r), r \in R}|$.
We define \emph{vertex multiplicity} of an MRMP-GP instance $M$, denoted by $\VM(M)$, to be $\max_{v \in V(W)} N(v)$.
For the case where $W$ is a 2D grid, we examine path shape complexity. We define the \emph{turn number} to be the maximum number of turns, i.e., the minimum number of line segments needed to draw a path on the grid minus one, made by any input path.

\begin{figure}[t] \centering
    \begin{subfigure}{0.2\textwidth} \centering
        \includegraphics[page=1, scale=0.82]{}
    \end{subfigure}
    \hspace*{\fill}
    \begin{subfigure}{0.2\textwidth} \centering
        \includegraphics[page=2, scale=0.75]{figures/intro.pdf}
    \end{subfigure}
    \caption{An MRMP-GP instance in which $W$ has $8$ vertices. There are $3$ robots, each with its own source, target and path which are shown in the robot's color.
    The instance has the non-blocking targets property but does not have one-way motion (due to the edge $e$). Its vertex multiplicity is 2.
    A possible solution to this instance is to move each robot all the way to its target in the order $r_0$, $r_2$, $r_1$.
    (While it is not needed in this example, in general we allow a robot to stay put in an intermediate vertex along its path, while other robots move.)
} %
\label{fig:example_instance}
\end{figure}

\begin{figure}[t]
  \includegraphics[page=1, width=0.28\textwidth]{}
 \caption{ A simple cycle of $6$ robots, $r_0,\ldots,r_5$, where each robot is blocking the previous one from reaching its target.}

\label{fig:simple_cycle}
\end{figure}

\textbf{Shorthand notation.}
We use the following shorthand notation throughout the paper.
For the problem variant with non-blocking targets, we use MRMP-GP-NBT($x$), where $x$ indicates the vertex multiplicity.
We similarly use MRMP-GP-UNI($x$) for the variant that has uni-directional motion.
When $W$ is a 2D grid, we also indicate the turn number as the second parameter, e.g., MRMP-GP-UNI(2,0) indicates a VM of 2 and straight paths (i.e., 0 turns).

\section{Algorithm}
\label{sec:main-alg}
We now consider MRMP-GP-NBT(2), i.e., the variant of MRMP-GP having the non-blocking targets property with a vertex multiplicity of at most 2.
We present a polynomial-time algorithm that finds a solution or reports that none exists.
The algorithm has two phases, which we now describe.

\noindent \paragraph{Phase~1.} A robot $r$ located at vertex $v \in \pi(r)$ is said to have a \emph{clear path} if there are no other robots along the remainder of its path, i.e., the subpath of $\pi(r)$ from $v$ to $t(r)$.
Given the restriction that a robot's target is not included in any other robot's path, a robot with a clear path that is moved to its target will not block any other robot in the future.
Hence, we move each robot with a clear path to its target until no such robot exists.
Note that moving one robot may clear the path of another robot, therefore the robots are checked repeatedly.

\noindent \paragraph{Phase~2.} Next, the algorithm iteratively identifies and solves \emph{cycles}.
In each iteration, a single cycle is solved, which amounts to moving only the cycle's robots. 
At the beginning of each iteration, we have the following invariant:
\begin{quote}
Each robot either
(i) reached its target vertex, or 
(ii) is still at its source vertex and it does not have a clear path. 
\end{quote}

At the end of Phase~1, the invariant is clearly maintained.
A robot $r$ is said to be \emph{blocked} by another robot $r'$ if the path from the current position of $r$ to $t(r)$ contains the current position of $r'$.
At the beginning of each iteration, any robot that has not reached its target yet is blocked by some other robot.
Let $r_0$ be one such robot and let $r_1$ be the robot blocking it, which must therefore not be at its target $t(r_1)$.
Then robot $r_1$ has not yet reached its target, therefore the path $\pi(r_1)$ contains some other robot $r_2$, which blocks $r_1$, and so on.
The blocking relationship yields a sequence $r_0,r_1,r_2,\ldots,r_{h-1}$, where $r_{h-1}$ is the first robot that is blocked by a robot $r_i$ already appearing in the sequence, i.e., $i<h-1$.
See Figure~\ref{fig:simple_cycle} for an illustration.
To simplify notation, from this point on we use indices of robots modulo $h$, i.e., we write $r_i$ instead of $r_{(i \mod h)}$.

\begin{lemma}
A blocking sequence $r_0,r_1,...,r_{h-1}$ that ends when $r_{h-1}$ is blocked by some robot $r_i$, for $0 \leq i < h-1$, forms a \textit{cycle}, i.e., each robot $r_i$ is blocked by $r_{i+1}$.
\end{lemma}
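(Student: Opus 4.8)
The plan is to exploit the vertex-multiplicity bound to show that each robot can block \emph{at most one} other robot, and then argue that the only way the blocking sequence can first revisit a previously seen robot is by closing back onto $r_0$.

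First I would pin down the positions of all robots in the sequence. Since we are at the beginning of a Phase~2 iteration, the invariant guarantees that every robot has either reached its target or sits at its source. A robot in the sequence is, by definition, blocked and hence has not reached its target, so it occupies its source; moreover the blocker of any $r_j$ cannot sit on a target, since that would make $t(r_{j+1})$ a blocking target, contradicting the non-blocking-targets property of this variant. Thus it too is at its source. Consequently every $r_j$ in the sequence occupies $s(r_j)$, and the blocking relation simplifies to a purely static condition: $r'$ blocks $r$ exactly when $s(r') \in \pi(r)$.

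The key structural step is a counting argument. Fix a robot $r'$ at its source $s(r')$. The robots that $r'$ blocks are precisely those $r \neq r'$ with $s(r') \in \pi(r)$, i.e.\ the robots whose paths pass through the vertex $s(r')$. Since $s(r')$ already lies on $\pi(r')$ itself and $\VM \le 2$ caps the number of paths through any vertex at two, at most one path \emph{other} than $\pi(r')$ can contain $s(r')$. Hence each robot blocks at most one other robot. This is the single place where the hypothesis $\VM \le 2$ is essential, and I expect it to be the main obstacle: one must be careful to count $\pi(r')$ itself among the (at most two) paths through $s(r')$, so that the bound of two translates into ``at most one other blocked robot'' rather than two.

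Finally I would close the cycle by contradiction. By construction $r_{j+1}$ blocks $r_j$ for every $j$, and the sequence terminates when $r_{h-1}$ is blocked by some $r_i$ with $i < h-1$; equivalently $r_i$ blocks $r_{h-1}$. Suppose $i \ge 1$. Then $r_i$ blocks both $r_{i-1}$ (from the construction of the sequence) and $r_{h-1}$ (the closing relation). By the counting step $r_i$ blocks at most one robot, forcing $r_{i-1} = r_{h-1}$; but $r_0,\dots,r_{h-1}$ are distinct and $i-1 < h-1$, a contradiction. Therefore $i = 0$, so $r_{h-1}$ is blocked by $r_0 = r_h$, and under the modular indexing every $r_j$ is blocked by $r_{j+1}$, which is exactly the claimed cycle. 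The closing argument is then just a short pigeonhole contradiction once the at-most-one-blocked bound is in hand.
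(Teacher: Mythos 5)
Your proposal is correct and follows essentially the same route as the paper: the paper's proof also assumes $r_{h-1}$ is blocked by $r_i$ with $i>0$ and derives a contradiction from the fact that $\pi(r_{i-1})$, $\pi(r_i)$, and $\pi(r_{h-1})$ would all contain $\src{r_i}$, violating the vertex-multiplicity bound of $2$. Your intermediate reformulation (``each robot blocks at most one other robot'') and the explicit justification that all blockers sit at their sources are just slightly more spelled-out versions of the same counting argument at the single vertex $\src{r_i}$.
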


\begin{proof}
Any robot $r_j,j<h-1$ is blocked by $r_{j+1}$ by definition. Hence, it remains to prove that $r_{h-1}$ is blocked by $r_0$.
Assume for a contradiction that $r_{h-1}$ is blocked by a robot $r_i$ with $i>0$.
This means that the paths of the robots $r_{i-1},r_i,r_{h-1}$ all contain the start vertex $\src{r_i}$, contradicting the assumption that each vertex is contained in at most two robots' paths.
\end{proof}

Therefore, all the robots that are not at their target can be divided into disjoint cycles.
The algorithm \textit{solves} each cycle (in a sense that we define next) independently, and then moves the cycle robots to their targets. This is possible since, as stated in \Cref{lemma:solved_cycle_clear_path} below, their paths must be clear.

Let us fix a cycle $C = r_0,r_1,...,r_{h-1}$.
Each robot $r_i \in C$ is currently at its source vertex.
\textit{Solving} the cycle $C$ is defined as moving each robot $r_i$ to the start position of the next robot, i.e., $s(r_{i+1})$, and $s(r_{i+1})$ is called the \textit{cycle target} of $r_i$.

\begin{lemma} \label{lemma:solved_cycle_clear_path}
After $C$ is solved, all the robots in $C$ have a clear path.
\end{lemma}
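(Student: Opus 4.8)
The plan is to reason about the configuration reached \emph{after} $C$ has been solved, where each $r_i$ sits at $s(r_{i+1})$, and to show that the remaining path of $r_i$ --- the subpath $P_i$ of $\pi(r_i)$ running from $s(r_{i+1})$ to $t(r_i)$ --- contains no other robot. The crux is a structural fact forced by $\VM \le 2$: at the beginning of the iteration, \emph{every non-target robot is blocked by exactly one robot}. I would establish this by a short degree-counting argument on the blocking relation. On one hand, a robot $A$ blocks at most one robot, since $A$ blocks $B$ means $s(A)$ lies on $\pi(B)$, and $s(A)$ lies on at most two paths ($\pi(A)$ and one other) by $\VM \le 2$. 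On the other hand, the Phase~2 invariant guarantees every non-target robot is blocked by at least one robot (it has no clear path). Counting the set of blocking pairs from both sides then pins both quantities to exactly one, so the blocking relation restricted to non-target robots is a permutation --- which is exactly what underlies the earlier claim that these robots split into disjoint cycles.

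The consequence I would extract is that the only robot source lying on $\pi(r_i)$, apart from $s(r_i)$ itself, is $s(r_{i+1})$: any other robot source on $\pi(r_i)$ would be a second blocker of $r_i$ at the start of the iteration, contradicting uniqueness. Combined with the non-blocking-targets property (no target $t(q)$ with $q \ne r_i$ lies on $\pi(r_i)$), this says that at the start of the iteration the \emph{only} occupied vertices on $\pi(r_i)$ are $s(r_i)$ (held by $r_i$) and $s(r_{i+1})$ (held by $r_{i+1}$). I would then check where every robot ends up after $C$ is solved and confirm none occupies $P_i$. Robots in other cycles do not move and sit at sources distinct from $s(r_i), s(r_{i+1})$, hence off $\pi(r_i)$ entirely; target robots are off $\pi(r_i)$ by NBT; and a cycle robot $r_j$ with $j \ne i$ ends at $s(r_{j+1})$, which can lie on $\pi(r_i)$ only if $s(r_{j+1}) \in \{s(r_i), s(r_{i+1})\}$, i.e.\ $j \in \{i-1, i\}$.

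The remaining case, $j = i-1$, is where the main (small) obstacle lies: robot $r_{i-1}$ comes to rest at $s(r_i)$, which \emph{is} a vertex of $\pi(r_i)$. The point to nail down is that $s(r_i)$ is the very first vertex $v_1$ of $\pi(r_i)$, whereas $s(r_{i+1})$ is a distinct vertex and therefore occurs strictly later along $\pi(r_i)$ (since $r_i$ had to move forward from its source to reach it, and no backward moves are allowed); hence $s(r_i) \notin P_i$ and $r_{i-1}$ does not block $r_i$. (When $h = 2$ one has $r_{i-1} = r_{i+1}$, and the same argument applies unchanged.) With every robot accounted for, $P_i$ is clear, and since $i$ was arbitrary the conclusion holds for all robots of $C$. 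I expect the counting argument for unique blocking, together with this boundary bookkeeping at $v_1$ versus $s(r_{i+1})$, to be the only genuinely delicate points; everything else is direct case elimination.
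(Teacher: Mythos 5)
Your proof is correct, and it reaches the conclusion by a different route than the paper. The paper argues by contradiction on a hypothetical blocker $r$ of $r_i$ after the cycle is solved: if $r$ lies outside $C$ it would block two robots from one vertex, and if $r$ lies inside $C$ it sits at some $s(r')$ which would then lie on the three paths of $r$, $r'$, and $r_i$ --- in both cases violating $\VM\le 2$. You instead work forward: you first pin down, by a degree count on the blocking relation, that each non-target robot has exactly one blocker (so the relation is a permutation), deduce that the only occupied vertices of $\pi(r_i)$ at the start of the iteration are $s(r_i)$ and $s(r_{i+1})$, and then enumerate where every robot ends up. Both arguments ultimately rest on the same fact that a third path through an occupied vertex is impossible, but your bookkeeping buys something concrete: the paper's second case tacitly assumes $r'\ne r_i$, and when $r=r_{i-1}$ (so $r'=r_i$) the ``three paths through $s(r')$'' count degenerates to two and yields no contradiction. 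Your explicit treatment of that case --- observing that $r_{i-1}$ lands on $s(r_i)=v_1$, which precedes $s(r_{i+1})$ on the simple path $\pi(r_i)$ and hence lies outside the remaining subpath $P_i$ --- is exactly the right way to close that corner, so your version is the more complete of the two.
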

\begin{proof}
    Assume by contradiction that some robot $r_i$ of $C$ is blocked by another robot $r$ after solving $C$.
    If $r$ is not in $C$, then $r$ belongs to another cycle, where it blocks another robot $r'$.
    This means that $r$ blocks two robots at its current vertex, which contradicts the assumption that vertex multiplicity is 2.
    If $r$ is also part of $C$, then $r$ was blocked by a robot $r'$ in $C$.
    After solving $C$, $r$ is at the source vertex $\src{r'}$, which implies that $\src{r'}$ appears on 3 robot's paths, namely, $r$, $r'$, and $r_i$, which again contradicts the assumption of a vertex multiplicity of 2.
\end{proof}

\paragraph{Solving a cycle.} %
The subpath of robot $r_i$ from $\src{r_i}$ to $\src{r_{i+1}}$ is called the \textit{cycle path} of robot $r_i$.
If there is a robot $r_j$ with a vertex $p$ on its cycle path that does not appear in the cycle path of another robot in $C$, then we call $r_j$ a \textit{scout}. For example, $r_0$ in \Cref{fig:simple_cycle} is a scout robot.
If $C$ contains a scout robot, then $C$ can be solved by moving the scout robot to $p$ and then moving each robot in the reverse order from the scout robot along its cycle path.
See \Cref{sec:scout-cycle-algo} for full details of this simple case.

\begin{figure}[h] \centering
    \includegraphics[page=4, width=0.47\textwidth]{}
    \caption{Unsolvable instances, each containing a single cycle: (a) A deadlock where no robot can move. (b) Either $r_1$ or $r_3$ can move, but either move will result in a deadlock of three robots similar to (a). (c) The graph $G$ constructed by the algorithm for the cycle in (b).}
    \label{fig:unsolvable_cycles} %
\end{figure}

If there is no scout robot in $C$, a solution might not exist; see \Cref{fig:unsolvable_cycles}.
To determine whether a cycle is solvable, we construct a directed graph~$G$ with vertices $V(G)\subset V(W)$ and edges $E(G)$.
The vertices $V(G)$ are a subset of the vertices on the cycle paths of the robots in $C$.
For each robot $r$ in $C$ and each directed edge $e$ on the cycle path of $r$, we add the edge $e$ to $E(G)$.
We label this edge by $r$ to remember which robot induced it.
It is therefore possible that we have two edges from one vertex to another in $G$, but they will be labeled by different robots.
We say that a directed graph with labeled edges of this form is \emph{composed} of paths; see \Cref{fig:composedPaths}. %
Note that the graph $G$ has the following properties:
\begin{itemize} [topsep=2pt]
    \setlength\itemsep{0pt}
    \item For each distinct label, the edges with that label form a path.
    \item Each vertex appears on the paths of exactly two labels.
    \item The end vertex of a path with one label is the start vertex of a path with another label.
    \item The start vertex of a path with one label is the end vertex of a path with another label.
    \item There is an Eulerian  cycle in $G$ such that for each label, the edges with that label are traversed consecutively in the cycle.
\end{itemize}
Any directed graph with labeled edges that has these properties is said to be \emph{composed} of paths.
A \emph{start} vertex of such a graph $G$ is the start vertex of the path with any label, and we denote the set of start vertices as $V_s(G)$.

\begin{figure}
  \centering
  \includegraphics[page=9, width=0.46\textwidth]{}
  \caption{Left: A graph composed of paths, which is solvable according to \Cref{lemma:basecases-solvable}.
  Right: A graph that is not composed of paths, since the edges do not form an Eulerian cycle of the required type.}
  \label{fig:composedPaths}
\end{figure}

A graph $G$ which is composed of paths represents an equivalent robot cycle obtained in the following simple way.
For the path $\pi$ in $G$ consisting of all edges with label $r$, we place the robot $r$ on the start vertex of $\pi$, and the robot $r$ must traverse the path $\pi$.
See \Cref{fig:unsolvable_cycles}(c) for illustration.
If this instance of MRMP-GP has a motion plan, we say that $G$ is \emph{solvable}.

Let us first prove an elementary lemma about the degrees of a graph $G$ composed of paths.
Let $\din(v)$ and $\dout(v)$ denote the in- and out-degree of a vertex $v$ of $G$.

\begin{lemma} \label{lemma:degree11} \label{lemma:degree22}
Let $G$ be a graph composed of paths and $v$ be a vertex of $G$.
If $v \in V_s(G)$ then we have $(\din(v),\dout(v))=(1,1)$ and otherwise $(\din(v),\dout(v)) = (2,2)$.
\end{lemma}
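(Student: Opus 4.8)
The plan is to compute $\din(v)$ and $\dout(v)$ by summing the degree contributions of the labeled paths that pass through $v$, reading off each contribution from the role that $v$ plays on that path. First I would record the three possible roles of a vertex on a single directed path $\pi$: if $v$ is the start of $\pi$ it contributes $(\din,\dout)=(0,1)$, if it is the end it contributes $(1,0)$, and if it is an interior vertex it contributes $(1,1)$. Because each path is simple (no vertex repeats), these roles are mutually exclusive on a given path, so the total degrees of $v$ are just the sum of the two contributions coming from the two labels whose paths contain $v$.

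The key structural step is to observe that the set of start vertices $V_s(G)$ coincides with the set of path-end vertices. This is immediate from the two defining properties that the end vertex of one path is the start vertex of another and the start vertex of one path is the end vertex of another: every end vertex is then also a start vertex and conversely. Consequently, membership in $V_s(G)$ is exactly the right dichotomy. If $v \in V_s(G)$, then $v$ is both a start vertex (of some path) and an end vertex (of some other path); if $v \notin V_s(G)$, then $v$ is neither a start vertex nor an end vertex of any path, hence it is an interior vertex of every path it lies on.

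With this in hand the case analysis is short. If $v \notin V_s(G)$, then on each of its two paths $v$ is interior, so each contributes $(1,1)$ and we obtain $(\din(v),\dout(v))=(2,2)$. If $v \in V_s(G)$, then among the two labels through $v$ one path has $v$ as its start and the other has $v$ as its end (these are distinct paths, since a simple path cannot both begin and end at $v$); the former contributes $(0,1)$ and the latter $(1,0)$, giving $(\din(v),\dout(v))=(1,1)$.

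I do not anticipate a serious obstacle, as the argument is essentially bookkeeping once the structural properties are invoked. The one point requiring care is verifying that the two paths through $v$ realize exactly the claimed roles and no others --- in particular, that a vertex of $V_s(G)$ is not additionally an interior vertex of some third path. This is guaranteed by the property that each vertex lies on exactly two labels' paths, which caps the number of contributions at two and, combined with the start/end coincidence above, forces those two contributions to be precisely $(0,1)$ and $(1,0)$ at a start vertex and precisely $(1,1)$ and $(1,1)$ elsewhere.
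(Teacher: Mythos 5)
Your proof is correct and follows essentially the same route as the paper's: both arguments use the facts that each vertex lies on exactly two labels' paths and that start vertices coincide with end vertices, then read off the degree as the sum of the two paths' contributions (start/end roles at a vertex of $V_s(G)$, interior roles otherwise). Your version merely makes the per-path degree bookkeeping more explicit.
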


\begin{proof}
Consider $v \in V_s(G)$.
By the properties of $G$, the path of one label starts at $v$, and the path of another label ends at $v$.
Since $v$ only appears on the paths of two labels, the statement follows.

Now consider a vertex $v \in V(G)\setminus V_s(G)$.
By the properties of $G$, the vertex~$v$ is included in the paths of two labels and is not a target vertex of either of them.
This means that on both paths, there are edges to $v$ and edges out of $v$, so $(\din(v),\dout(v)) = (2,2)$.
\end{proof}

We now turn our attention to classifying the solvable graphs that are composed of paths.
\Cref{lemma:basecases-unsolvable,lemma:basecases-solvable} present the \textbf{base cases} of solvable vs.\ unsolvable graphs:

\begin{lemma} \label{lemma:basecases-solvable}
Consider a graph $G$ that is composed of paths.
If all simple cycles in $G$ have at least two vertices that are not start vertices, then $G$ is solvable.
\end{lemma}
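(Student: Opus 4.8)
The plan is to give a constructive argument and prove solvability by strong induction on the number of edges of $G$ (equivalently, the total length of the robot paths). First I would recast "solving $G$'' in the language of holes. By \Cref{lemma:degree11}, the start vertices (initially holding one robot each) have degree $(1,1)$ and the non-start vertices (initially empty) have degree $(2,2)$, so the $m := |V(G)| - |V_s(G)|$ empty positions sit exactly at the non-start vertices. A legal move slides a robot forward into an empty next vertex, equivalently pushing an empty position backward along that robot's path. Solving $G$ amounts to shifting every robot forward by one full path segment; note that in the target configuration each robot $r_i$ sits at the end of its path, which is the start vertex of the next label, so the occupied set is again exactly $V_s(G)$ and the holes are merely circulated back onto the non-start vertices.

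The engine of the induction is the observation that, under the hypothesis, a forward move into an empty non-start vertex always exists from the (re-initialized) starting configuration. Suppose no robot can advance; then every robot's next vertex is occupied, hence is a start vertex. But a start vertex $u$ has a single out-edge, namely the first edge of $u$'s own path, so no path can pass through $u$ as an interior vertex (that would require an out-edge at $u$ carrying a different label). Consequently, if a path's first edge enters $u$, then $u$ must be the \emph{last} vertex of that path, i.e.\ the path has length one. If this held for every robot, all paths would have length one and $G$ would be a disjoint union of directed simple cycles through start vertices only, each with zero non-start vertices, contradicting the hypothesis. Hence some robot $r$ can advance one step into an empty non-start vertex $v$. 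I would then show that this move lets us pass to a strictly smaller graph $G'$ that is again composed of paths and again satisfies the "at least two non-start vertices per simple cycle'' condition, and whose solvability implies that of $G$; the induction hypothesis applied to $G'$ closes the argument, with the base case (all robots already at their targets) immediate.

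The main obstacle is precisely this reduction step, because the "composed of paths'' structure is rigid: every vertex must lie on exactly two labels (\Cref{lemma:degree11}), so one cannot naively delete an edge, contract the traversed edge $\src{r}\,v$, or excise a finished robot's path without producing a vertex on one or three labels and destroying the Eulerian, label-consecutive invariant. The crux is to design a contraction that simultaneously (i) keeps every vertex on exactly two labels, (ii) never drops a simple cycle below two non-start vertices, and (iii) admits a lift of motion plans back to $G$. The quantitative heart of (ii)--(iii) is exactly the gap between one and two free vertices: with a single non-start vertex on a cycle, advancing a robot into it recreates a slack-free cycle and deadlocks --- this is the phenomenon in \Cref{fig:unsolvable_cycles}(b) and the content of \Cref{lemma:basecases-unsolvable} --- whereas the guaranteed \emph{second} non-start vertex supplies the maneuvering buffer that prevents any robot from being trapped. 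I therefore expect the bulk of the work to be a careful case analysis verifying that the chosen safe move, together with the contraction, preserves the two-non-start-vertices-per-cycle invariant; once that is secured, the induction carries the rest.
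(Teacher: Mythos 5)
There is a genuine gap: your induction is never closed, and the missing step is the entire content of the lemma. What you actually establish is (a) a hole-pushing reformulation and (b) that from the all-robots-on-start-vertices configuration \emph{some} robot can advance one step. Both are correct, but (b) alone does not prevent a later deadlock, and your plan to "pass to a strictly smaller graph $G'$ that is again composed of paths and again satisfies the condition" is exactly the part you defer as "the main obstacle" without constructing it. Moreover, the natural candidate contraction genuinely fails: if $r$ advances from $\src{r}=u$ to the non-start vertex $v$ and you try to re-root $r$'s path at $v$, then either $u$ survives with degree $(1,0)$, or the path ending at $u$ must be rerouted into $v$, putting $v$ on three labels; and even if the degree bookkeeping were repaired, promoting $v$ to a start vertex drops every simple cycle through $v$ that had exactly two non-start vertices down to one, violating the very invariant your induction needs. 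The paper's Figure~\ref{fig:threemoves} is evidence that no purely greedy/local one-step scheme works: a robot may be forced to move in three separate bursts, interleaved with moves of robots from other parts of the graph, so the order of advances must be orchestrated globally rather than derived from "any available move plus recursion."

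The paper's proof takes a different and fully explicit route: it groups the robots into blocks $B_1,B_2,\ldots$ (maximal runs of consecutively placed robots), runs Algorithm~\ref{alg:no_cycles}, which visits the blocks cyclically via $\nextb(\cdot)$ --- advancing a block head one step, shifting the block head-to-tail, pulling the previous block's head forward, then pushing the current head as far as it can go --- and proves the configuration invariant that at the start of each block's turn at most one non-start vertex is occupied, namely the in-neighbor $w$ of $\src{\tail(B_i)}$. The hypothesis "every simple cycle has at least two non-start vertices" is used at exactly one point: to show $w\neq z$, where $z$ is the out-neighbor of the block's head, so that the head always has a free vertex to step into. If you want to salvage your inductive framing, you would essentially have to rediscover this block structure and schedule inside the inductive step; as written, the proposal identifies where the hypothesis must bite but does not supply an argument that it does.
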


\begin{restatable}{lemma}{unsolvablebase} 
\label{lemma:basecases-unsolvable}
If a graph $G$ composed of paths contains a simple cycle where all vertices are start vertices, then $G$ is not solvable.
\end{restatable}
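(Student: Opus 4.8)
The plan is to exhibit a deadlock among the robots whose paths start on the given simple cycle, showing that none of them can ever make a single move, and hence none reaches its target. Let $C' = v_0 \to v_1 \to \cdots \to v_{k-1} \to v_0$ (indices mod $k$) be a simple cycle in $G$ in which every vertex is a start vertex. By \Cref{lemma:degree22}, each $v_i$ satisfies $(\din(v_i),\dout(v_i)) = (1,1)$, so the cycle edge $v_i \to v_{i+1}$ is the \emph{unique} outgoing edge of $v_i$ in $G$.

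First I would identify the relevant robots. Since $v_i$ is a start vertex, by the defining properties of a graph composed of paths exactly one label's path starts at $v_i$; call the corresponding robot $a_i$, which is therefore initially placed at $v_i$. Because each path has a single start vertex and the $v_i$ are distinct, the robots $a_0,\ldots,a_{k-1}$ are pairwise distinct. The first edge of $a_i$'s path is an outgoing edge of $v_i$, and since $v_i$ has only one outgoing edge, that first edge is exactly $v_i \to v_{i+1}$. Hence the very first move that $a_i$ must make is to step onto $v_{i+1}$.

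Next I would run the deadlock argument. Each $v_{i+1}$ is a start vertex, hence initially occupied by $a_{i+1}$. Consider any purported motion plan and look at the first time step at which some robot of $\set{a_0,\ldots,a_{k-1}}$ moves; say this robot is $a_i$. Its only available move is onto $v_{i+1}$, which requires $v_{i+1}$ to be unoccupied. But up to this moment no robot of the set has moved, so in particular $a_{i+1}$ still sits on $v_{i+1}$; the move is therefore illegal, a contradiction. Consequently none of $a_0,\ldots,a_{k-1}$ ever moves, and since each of these robots has a nonempty path, none of them reaches its target. Thus no motion plan exists and $G$ is not solvable.

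The argument is short once \Cref{lemma:degree22} is in hand; the only point requiring care is the bookkeeping that ties the unique outgoing edge at each start vertex to the first move of the robot placed there. Concretely, one must verify that the lone out-edge of $v_i$ belongs to the path that \emph{starts} at $v_i$ rather than to the second label passing through $v_i$; this holds because the path ending at $v_i$ contributes only an incoming edge there, leaving the single outgoing edge to the path that starts at $v_i$. I expect this identification to be the main (mild) obstacle, after which the deadlock conclusion follows immediately from the ``first robot to move'' extremal argument.
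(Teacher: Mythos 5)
Your proof is correct and follows essentially the same route as the paper's: both arguments establish that each robot on the cycle must make its first move onto the next cycle vertex (the paper via the ``at most two labels per vertex'' property, you via the degree lemma, which amounts to the same bookkeeping) and then conclude a deadlock because every such vertex is occupied. Your ``first robot to move'' extremal phrasing is a slightly more formal rendering of the paper's direct ``no robot can move'' statement, but the substance is identical.
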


The proof of \Cref{lemma:basecases-unsolvable} is given in \Cref{sec:app:base}.
We postpone the proof of \Cref{lemma:basecases-solvable},  to first describe the algorithm that solves the corresponding instance of MRMP-GP; see Figure~\ref{fig:no_loops} for a non-trivial example.

The algorithm begins by partitioning the robots into blocks as follows.
Consider a maximal directed path $\pi$ in $G$ with at least $3$ vertices such that each vertex of $\pi$ except the first and last is a start vertex.
Let $r_1,\ldots,r_k, k \geq 1$ be the corresponding robots starting on these start vertices in order, and let $B_1=(r_1,\ldots,r_k)$.
Running over all such paths $\pi$, we obtain blocks of robots $B_1,B_2,\ldots$.

For a block $B_i=(r_1,\ldots,r_k)$, we then define $\head(B_i)=r_k$ and $\tail(B_i)=r_1$.
The robot $r_k$ has its \textit{cycle target} at the start vertex of a robot $r'_1$ in another block $B_j$.
We then define $\nextb(B_i)=B_j$ and $\prevb(B_j)=B_i$.
Due to the properties of a graph composed of paths, these relations are defined for all blocks $B_i$.

We now describe our algorithm; See Algorithm~\ref{alg:no_cycles} for pseudo-code.
We first choose an arbitrary block $B_i$.
Then, we move all robots of $B_i$ a single edge forward from $\head(B_i)$ to $\tail(B_i)$.
Move $\head(B_i)$ along its cycle path until it cannot be moved any further.
Let $B_j=\nextb(B_i)$ be the block containing the robot, $\tail(B_j)$, that blocks $\head(B_i)$.
Move the robots $B_j$ forward by one edge and then move $\head(B_i)$ to its cycle target, which is the original position of $\tail(B_j)$.
Move $\head(B_j)$ as much as possible.
From here on, the pattern repeats until all the blocks have been traversed.

\begin{algorithm}
    \caption{%
    Solve an instance of MRMP-GP corresponding to a graph $G$ composed of paths satisfying the requirements of \Cref{lemma:basecases-solvable}.}
    \label{alg:no_cycles}
    $B_{\textit{start}}\gets B_1$\;
    $B \gets B_{\textit{start}}$\;
    \Loop{} { \label{alg_no_cycle:loop}
        \For{$r \in B$
        in order from $\head(B)$ to $\tail(B)$} {
            Advance $r$ one edge\; %
        }
        \If{$B$ is not $B_{\textit{start}}$} {
            Advance $\head(\prevb(B))$ one edge\label{alg2:line7}\; %
        }
        Advance $\head(B)$ as much as possible\label{alg2:line8}\;
        $B \gets \nextb(B)$\;
        \If{$B$ is $B_{\textit{start}}$} {
            \Return\;
        }
    }
\end{algorithm}

Note that this algorithm can move the head of a block three times.
It might be tempting to suggest that line~\ref{alg2:line8} of the algorithm can be avoided (by instead advancing $\head(\prevb(B))$ as much as possible in line~\ref{alg2:line7}), but this is not the case, as shown in \Cref{fig:threemoves}.

\begin{figure} \centering
    \hspace*{\fill}
    \begin{subfigure}{0.2\textwidth} \centering
        \includegraphics[page=1, scale=0.6]{}
        \caption{} \label{}
    \end{subfigure}
    \hspace*{\fill}
    \begin{subfigure}{0.2\textwidth} \centering
        \includegraphics[page=9, scale=0.6]{figures/no_loops.pdf}
        \caption{} \label{}
    \end{subfigure}
    \hspace*{\fill}
    \begin{subfigure}{0.2\textwidth} \centering
        \includegraphics[page=2, scale=0.6]{figures/no_loops.pdf}
        \caption{} \label{}
    \end{subfigure}
    \hspace*{\fill}
    \begin{subfigure}{0.2\textwidth} \centering
        \includegraphics[page=3, scale=0.6]{figures/no_loops.pdf}
        \caption{} \label{}
    \end{subfigure}
    \hspace*{\fill}
    \begin{subfigure}{0.2\textwidth} \centering
        \includegraphics[page=4, scale=0.6]{figures/no_loops.pdf}
        \caption{} \label{}
    \end{subfigure}
    \hspace*{\fill}
    \begin{subfigure}{0.2\textwidth} \centering
        \includegraphics[page=5, scale=0.6]{figures/no_loops.pdf}
        \caption{} \label{}
    \end{subfigure}
    \hspace*{\fill}

    \caption{
    (a) An instance with a single blocking cycle.
    (b) The graph $G$ created for the single cycle, in which all simple cycles contain two free (non-start) vertices.
    The blocks of maximal consecutive robots are $B_1=(r_0,r_1), B_2=(r_2), B_3=(r_3,r_4,r_5), B_4=(r_6), B_5=(r_7), B_6=(r_8)$.
    We illustrate a few steps of the solution obtained by
    Algorithm~\ref{alg:no_cycles}:
    (c) $B_1=(r_0,r_1)$ is the first chosen block, for which robots are moved head to tail.
    (d) The robots of $\nextb(B_1)$, $B_2=(r_2)$, are moved head to tail, and then $\head(B_1)$ is moved again.
    (e) The robots $\nextb(B_2)$, $B_3=(r_3,r_4,r_5)$, are moved head to tail, and then $\head(B_2)$ is moved.
    (f) $B_4=(r_6)$ and $\head(B_3)$ are moved.
    }
    \label{fig:no_loops}
\end{figure}

\begin{figure}[h]
\centering
\includegraphics[page=8, width=0.4\textwidth]{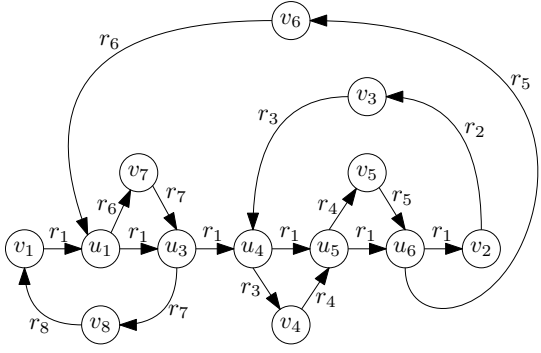}
\caption{Suppose that we first move the robot $r_7$ from $v_7$ to $u_3$.
If our next move is to advance $r_1$ and $r_8$, as dictated by Algorithm~\ref{alg:no_cycles}, then it is unavoidable to move $r_1$ three times.
Indeed, after clearing $u_3$, $r_1$ must move to $u_6$ (or $u_5$) and wait for $v_2$ to be cleared.
Otherwise, if $r_1$ stays at $u_1$, then one of the vertices $u_4,u_5,u_6$ becomes occupied when freeing $v_2$, so $r_1$ cannot move to $v_2$ in a single move. }
\label{fig:threemoves}
\end{figure}

\begin{proof}[Proof of \Cref{lemma:basecases-solvable}.]
We show that Algorithm~\ref{alg:no_cycles} solves $I(C)$, the instance of MRMP-GP corresponding to $G$ of cycle $C$.
When solving the cycle, we move each block of robots $B_i$ from head to tail.
Clearly, only moving $\head(B_i)$ can be an invalid motion, as the rest of the robots in $B_i$ are always moved to a vertex that was just evacuated by the previously moved robot.

\begin{figure*}
\includegraphics[page=3, width=0.98\textwidth]{}
\caption{
(a) A cycle $v_0,\ldots,v_{k-1}$ that requires untangling.
(b) A graph composed of paths that requires two untangling operations, shown in (c) and (d), after which we conclude that the graph is unsolvable.}
\label{fig:untangling}
\end{figure*}

Denote by $u,v$ the vertices that are occupied by the tail and head of $B_1$ respectively ($u,v$ might be the same vertex if $B_i$ consists of a single robot).
Both $u,v$ are in $V_s(G)$, so by \Cref{lemma:degree11}, $u$ has a single in-going edge, from another vertex $w$, and $v$ has a single out-going edge, to another vertex $z$.
Both $w,z$ are not occupied by robots and are not in $V_s(G)$, otherwise these robots would have been included in $B_i$.
If $w=z$, then $w$ and all the vertices occupied by the robots of $B_i$ form a simple cycle with a single non-start vertex, in contradiction to the lemma's assumption, therefore $w \neq z$.

At the beginning of an iteration of the main loop (line~\ref{alg_no_cycle:loop}), in which $B_i$ is handled, we have the following invariant:
At most one vertex of $V(G) \setminus V_s(G)$ is occupied by a robot, and the occupied vertex is the source of the edge directed to the vertex $\src{\tail(B_i)}$.
In other words, the occupied vertex is $w$, as defined above.
We now verify that the invariant is maintained.

Initially, all the robots are on $V_s(G)$ and the invariant is maintained.
Suppose now that the invariant holds at the beginning of an iteration.
Before moving any robot of $B=B_i$, $w$ is the only non-start vertex that can be occupied, therefore $z$ is never occupied, and moving $\head(B_i)$ a single step to $z$ is valid since $w\neq z$.
After moving the head of $B_i$, the rest of the robots of $B_i$ are moved head to tail, and the robot that occupied $w$ is moved to the original position of the tail of $B_i$.
Lastly, the head of $B_i$ is moved as much as possible, until it reaches $w'$, the source vertex of the in-edge of $\src{\tail(\nextb(B_i))}$, and the invariant is maintained.
When the last block is handled, the head is moved as much as possible and reaches $\src{\tail(\nextb(B_i))}$.
Therefore the motion is valid.

We now verify that all robots get to their cycle targets.
After the algorithm handles a single block $B_i$, all robots in $B_i$ except $\head(B_i)$ reach their cycle targets.
As part of handling the next block $\nextb(B_i)$, $\head(B_i)$ is moved again and reaches its cycle target, so all the robots in $B_i$ traverse their cycle path.
If $B_i$ is the last block handled, then all the robots of other blocks reached their cycle targets, including $tail(B_{\textit{start}})$, therefore when $head(B_i)$ is advanced as much as possible it will reach its cycle target, the start position of $tail(B_{\textit{start}})$.
All the blocks $B_1,B_2,\ldots$ are examined, therefore all the robots reached their cycle targets and the cycle is solved. %
\end{proof}

If $G$ does not fall under one of the base cases of \Cref{lemma:basecases-unsolvable,lemma:basecases-solvable}, we gradually reduce $G$ to a graph that does fall under one of these cases, as we describe next.
So suppose that we are in a situation not included in the base cases, namely that $G$ has a simple cycle $v_0,\ldots,v_{k-1}$ where $v_0$ is not a start vertex, but all other vertices are.
Note that we must have $(\din(v_0),\dout(v_0))=(2,2)$, according to \Cref{lemma:degree22}.
It follows that $(v_0,v_1), (v_{k-1},v_0), (u,v_0), (v_0,w)$ are edges in $E(G)$ incident to $v_0$, where $u,w$ are vertices not appearing in the simple cycle $v_0,\ldots,v_{k-1}$.
Moreover, there are two robots $r$ and $r'$ such that $(u,v_0)$ and $(v_0,v_1)$ are labeled with $r$, and $(v_{k-1},v_0)$ and $(v_0,w)$ are labeled with $r'$, as illustrated in \Cref{fig:untangling}(a).
If all four edges had the same label $r$ then $r$ would be a scout robot.
The edges $(v_{k-1},v_0)$ and $(v_0,v_1)$ cannot have the same label, as then the robots on $v_0,\ldots,v_{k-1}$ would form a blocking cycle independent of the rest of the blocking cycle $C$ currently being solved.

We now define an \emph{untangling} operation which leads to a new graph $G'$ with vertices $V(G')=V(G)\setminus\{v_0\}$; see \Cref{fig:untangling}.
In $G'$ we replace the two edges labeled by $r$ with a single edge $(u,v_1)$ and keep the label $r$.
We likewise replace the two edges labeled $r'$ with a single edge $(v_{k-1},w)$ labeled $r'$.

We now prove that the untangling operation maintains solvability.

\begin{lemma}\label{lemma:maintainComposed}
\label{lemma:untangling}
Let $G$ be a graph composed of paths and let $G'$ be the graph obtained by performing the untangling process in $G$.
Then (i) $G'$ is also a graph composed of paths and (ii) $G'$ is solvable if and only if $G$ is solvable.
\end{lemma}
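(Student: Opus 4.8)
The plan is to prove the two parts separately: part~(i) by directly checking the five defining properties of a graph composed of paths, and part~(ii) by exhibiting explicit transformations between motion plans for $I(G)$ and $I(G')$ in both directions.

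For part~(i), the key observation is that $v_0$ is an \emph{internal} vertex of exactly the two label-paths $r$ and $r'$: along $r$ it reads $\ldots,u,v_0,v_1$ and along $r'$ it reads $v_{k-1},v_0,w,\ldots$, using the degree fact $(\din(v_0),\dout(v_0))=(2,2)$ from \Cref{lemma:degree22}. The untangling merely splices $v_0$ out of each of these two paths and leaves every other path untouched, so I would verify the properties one at a time. Each surviving label still forms a simple path, since we only delete an internal vertex and the new edges $(u,v_1)$ and $(v_{k-1},w)$ create no repeated vertex (as $u,w\notin\{v_0,\dots,v_{k-1}\}$, in particular $u\neq v_1$ and $v_{k-1}\neq w$). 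The only vertices whose incident labels change are $u,v_1,v_{k-1},w$, and a one-line check shows each still lies on exactly two labels. Because the start and end vertices of every path are unchanged, the two incidence properties relating path endpoints carry over verbatim. Finally, an Eulerian cycle of the required type for $G$ yields one for $G'$: inside the $r$-block the consecutive edges $(u,v_0),(v_0,v_1)$ are replaced by $(u,v_1)$, and inside the $r'$-block $(v_{k-1},v_0),(v_0,w)$ by $(v_{k-1},w)$, leaving a closed walk that traverses each edge of $G'$ once with all labels still consecutive.

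For part~(ii), I first treat the easy direction, $G'$ solvable $\Rightarrow$ $G$ solvable, by \emph{expansion}. Given a plan for $I(G')$, I replace the single move $r\colon u\to v_1$ by the two consecutive moves $u\to v_0,\ v_0\to v_1$, and likewise $r'\colon v_{k-1}\to w$ by $v_{k-1}\to v_0,\ v_0\to w$. This is valid because $v_0$ lies only on the labels $r$ and $r'$, so no third robot ever contends for it; it is initially empty; and each inserted pair occupies it for a single step, with the two pairs occurring at distinct times, so their occupations never overlap. Every other move is between vertices different from $v_0$ and is therefore unaffected.

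The hard direction is $G$ solvable $\Rightarrow$ $G'$ solvable, by \emph{contraction}, and this is where I expect the main difficulty. The natural transformation deletes the moves $u\to v_0$ and $v_{k-1}\to v_0$ and rewrites $v_0\to v_1$ as $u\to v_1$ and $v_0\to w$ as $v_{k-1}\to w$. The subtlety is that this keeps $r$ parked at $u$ (and $r'$ at $v_{k-1}$) longer than in the original plan, so a retained move can break only if some other robot moved onto $u$ or onto $v_{k-1}$ during the interval in which $v_0$ was occupied in $I(G)$. Here the structure is crucial: by \Cref{lemma:degree11} the vertices $v_1,\dots,v_{k-1}$ are start vertices of in- and out-degree $1$, so the edges among them are single-edge label-paths, carried by robots $\rho_1,\dots,\rho_{k-2}$, and these together with $r,r'$ are the only labels meeting $v_1,\dots,v_{k-1}$. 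Thus the cycle robots merely perform a \emph{rotation}: from occupants $(\rho_1,\dots,\rho_{k-2},r')$ on $v_1,\dots,v_{k-1}$ with $v_0$ empty, to $(r,\rho_1,\dots,\rho_{k-2})$ with $r'$ departed to $w$. Only one label other than $r'$ meets $v_{k-1}$ (the robot with target $v_{k-1}$, namely $\rho_{k-2}$) and only one other than $r$ meets $u$, and since $v_0$ is shared by $r$ and $r'$ it can never serve as an independent buffer freeing $u$ and $v_{k-1}$ at once. I would exploit this to realize the rotation in $G'$ along the broken path $u,v_1,\dots,v_{k-1},w$ as the consecutive burst $r'\colon v_{k-1}\to w$, then $\rho_{k-2},\dots,\rho_1$ forward, then $r\colon u\to v_1$, inserted at the moment the original plan frees $w$ for $r'$, and postponing the at most one conflicting move onto $u$ (resp.\ $v_{k-1}$) to after the burst. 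The crux will be showing that these postponements are mutually consistent, i.e.\ that the shared-waypoint structure rules out the circular dependency that could otherwise let $G$ be solvable while $G'$ deadlocks; I expect this consistency argument to be the most delicate step.
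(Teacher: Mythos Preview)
Your treatment of part~(i) and of the direction $G'\Rightarrow G$ matches the paper's (which dispatches both in a line) and is more carefully spelled out; nothing to add there.

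For $G\Rightarrow G'$ you diverge from the paper and miss its key shortcut. The paper's argument rests on one observation: in \emph{any} solution to $I(G)$, robot $r'$ must traverse $v_0$ before $r$ does. If $r$ entered $v_0$ first, then all of $v_0,\ldots,v_{k-1}$ would be occupied with each occupant's unique forward edge pointing to the next cycle vertex---precisely the deadlock of \Cref{lemma:basecases-unsolvable}. Once this forced order is established, replacing $r'$'s pair $v_{k-1}\to v_0\to w$ by the merged edge (at the time of the second step) and, later, $r$'s pair $u\to v_0\to v_1$ by its merged edge yields a plan for $G'$; the $\rho_i$ moves just slot in between. You implicitly rely on this ordering too (you anchor your burst to the moment $w$ frees \emph{for $r'$}), but you never prove it.

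Your burst plan as written also has a specific gap. You fire $r\colon u\to v_1$ as the last step of the burst, at the moment the original plan frees $w$. But $v_{k-1}$ is a start vertex and $(v_{k-1},v_0)$ is $r'$'s \emph{first} edge, so $r'$ can reach $v_0$ and depart to $w$ very early in the plan---possibly long before $r$ has advanced to $u$ along its own path $\ldots,u,v_0,v_1$ (nothing forces $u\in V_s(G)$). At that instant $r$ need not be at $u$, and the final burst step is unavailable. The forced-order observation is exactly what lets the paper decouple $r$'s merged move from $r'$'s in time and thereby sidestep the ``mutually consistent postponements'' issue you correctly flag as the most delicate step.
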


\begin{proof}
Consider the untangling operation described above and the involved vertices $v_{k-1},v_0,v_1,u,w$.
First, it is easy to verify that the operation maintains the properties of a graph composed of paths.
Now suppose that $G$ is solvable.
Two robots should pass through $v_0$: denote by $r_u$ the robot that includes $u,v_0,v_1$ in its cycle path and $r_w$ the robot that includes $v_{k-1},v_0,w$ in its cycle path.
There is no robot at $v_0$, and the algorithm should decide which of $r_u,r_w$ will go through $v_0$ first.
If $r_u$ is moved to $v_0$ before $r_w$, the robots enter a deadlock, as the vertices $v_0,\ldots,v_{k-1}$ will all contain robots (i.e., the same situation as in~\Cref{lemma:basecases-unsolvable}).
Therefore $r_w$ must move to $v_0$ before $r_u$.
So $r_w$ moves first to $v_0$ and then eventually to $w$, and then $r_u$ moves to $v_0$ and then to $v_1$.
It follows that in $G’$, there is a motion plan where $r_w$ first moves on the merged edge $(v_{k-1},w)$, then all robots on the path $v_1,\ldots,v_{k-2}$ move, and then $r_u$ moves on the merged edge $(u,v_1)$.

Suppose now that $G’$ is solvable.
We can then simulate the same solution in $G$:
When $r_w$ traverses the edge $(r_{k-1},w)$ in $G'$, we let it traverse both edges $(r_{k-1},v_0)$ and $(v_0,v_w)$ in $G$.
Likewise, when $r_u$ traverses $(u,v_1)$ in $G'$, we let it traverse both edges $(u,v_0)$ and $(v_0,v_1)$ in $G$.
All other edges in $G$ and $G'$ are the same, and for these we copy the moves directly.
We then have a solution for $G$.
\end{proof}

Our algorithm proceeds by untangling cycles until it is no longer possible.
This results in a sequence of graphs $G_0,G_1,\ldots,G_{m}$, where $G_0=G$ is the original graph and each $G_i$ results from performing the untangling operation in $G_{i-1}$.
By \Cref{lemma:maintainComposed}, all these graphs are composed of paths, and the resulting graph $G_{m}$ is solvable if and only if the original graph $G$ is solvable.
Furthermore, there is nothing more to untangle in $G_{m}$, so we either have a simple cycle with robots on all vertices or there are at least two vertices with no robots on all simple cycles.
It then follows from \Cref{lemma:basecases-unsolvable,lemma:basecases-solvable} that there is a solution if and only if we are in the latter case.

After solving a cycle, all the cycle robots have a clear path to their target  (see~\Cref{lemma:solved_cycle_clear_path}), and the algorithm moves all of them to their targets one by one.
A single iteration of Phase~2 has ended, and the invariant that each robot is either at its start position or at its target position is maintained, as the only robots the algorithm moved were the cycle robots, and all of them reached their targets.

At any step of the algorithm, we move robots only when we have a guarantee that there is a solution after the move if and only if there is a solution before it.
Therefore, if the algorithm does not find a valid solution, namely one of the cycles was untangled to a graph with a simple cycle with robots on all vertices, there is no solution.
Otherwise, the algorithm finds a valid motion plan for all robots.

\paragraph{Running Time Analysis.}
Our algorithm can be implemented in time linear in the sum of the lengths of the robots' given paths, which we denote by $L$. %
We now provide the details on how to obtain such a running time.

To implement Phase~1 efficiently, first, for each robot $r$ we identify the first robot that blocks it. Then each robot not blocked by other robots is moved to its target.
Whenever a robot $r$ is moved to its target, if $r$ blocked robot $r'$ beforehand, we check whether $r'$ is blocked by some other robot subsequently.
Only the suffix of $\pi(r')$ can be checked, starting from the vertex $\src{r}$.
Hence, each vertex in the given paths is examined exactly the number of times it appears in the given paths during Phase~1.

To implement Phase~2 efficiently, we first divide the robots into independent cycles by checking for each robot which robot blocks it, which requires $O(L)$-time.
Then, each cycle $C$ is solved independently. If it contains a scout robot, clearly all computations are linear in the lengths of the robots' paths.
If $C$ does not contain a scout robot, we construct a graph composed of paths $G$ whose size is linear in the length of the robots paths.
If $G$ falls under one of the base cases of \Cref{lemma:basecases-unsolvable,lemma:basecases-solvable}, we can easily obtain a solution or declare these is no such one in time linear to the graph size.

To perform all the untangling operations efficiently in case $G$ does not fall under one of these base cases, firstly the blocks of maximal consecutive robots $B_1,B_2,\ldots$ are computed.
All vertices occupied by the robots of any $B_i$ have single in- and out-edges by \Cref{lemma:degree11}, and any vertex occupied by a robot is included in some $B_i$.
Therefore to identify a simple cycle with only a single vertex that is not occupied by a robot it suffices to check whether there is a $B_i$ such that the source vertex of the in-edge of $\tail(B_i)$ is the same as the target vertex of the out-edge of $\head(B_i)$.
When a block satisfies this property, we say that it is an \textit{knot} block.
We proceed by iterating over the blocks using the $\nextb(\cdot)$ relationship, starting from an arbitrary block $B$:
If $B$ is not a knot block, we continue to the next block, i.e., we set $B \gets \nextb(B)$.
If $B$ is a knot block, we perform the untangle operation on the in-edge of $\tail(B)$, the out-edge of $\head(B)$ and the shared vertex $u$ of both, resulting in the removal of $u$. %
A union of $B$ and $\prevb(B),\nextb(B)$ may be required, if $u$ was the only vertex between them, and in such scenario the next block to be checked is the resulting union rather than $\nextb(B)$.
We stop after iterating over all the blocks once.
At each iteration, we either advance using $\nextb(\cdot)$, or unite at least $2$ blocks, bounding the number of iterations by $2|\{B_1,B_2,\ldots\}|$ of $G_0$ (the original graph before untangling).
Each non-knot block will remain that way.
New knot blocks may be created only during the union of $\prevb(B),B,\nextb(B)$, which are created during an untangling operation, and the new blocks are checked in the next iteration.
Therefore there will not be more knot blocks after a single full traversal of the blocks $B_1,B_2,\ldots$.

After untangling a cycle graph $G$, it must fall under one of the base cases, which determine whether it is solvable.
If it is, we compute the robots' moves using Algorithm~\ref{alg:no_cycles}, which clearly runs in time linear in $|V(G)|$.
Hence, one iteration of Phase~2 solving a single cycle $C$ requires time linear in the sum of path lengths of the robots in $C$.
Since each cycle is solved once and each robot appears in at most one cycle, Phase~2 takes $O(L)$-time. %

Overall, both phases of the algorithm require linear time in the total lengths of the given paths.
Putting everything together, we obtain the following:

\begin{theorem}
MRMP-GP-NBT(2) has an algorithm that finds a solution or reports that no solution exists, in time linear in the total lengths of the given paths.
\end{theorem}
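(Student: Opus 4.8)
The plan is to assemble the pieces developed above into a single correctness-and-efficiency argument, organized around the two phases, with one overarching invariant: every move the algorithm performs is \emph{solvability preserving}, i.e., the instance admits a solution immediately before the move iff it does immediately after. Granting this, correctness is immediate — the algorithm either terminates by delivering a completed motion plan (so a solution exists), or it reports failure precisely when untangling some cycle produces a simple cycle all of whose vertices are start vertices; by \Cref{lemma:basecases-unsolvable} that final graph is unsolvable, by the untangling chain the original cycle graph is unsolvable, and by solvability preservation the whole original instance is unsolvable. I would begin with Phase~1: advancing a robot with a clear path to its target is safe because, under the non-blocking-targets property, a parked robot lies on no other path and can never block anyone, and it cannot destroy a solution since any future plan can be simulated with this robot already parked. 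After Phase~1 stabilizes, the stated invariant holds — each robot is at its target or stuck at its source without a clear path.

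Next I would handle Phase~2. By the blocking-cycle lemma (the first lemma above) the stuck robots partition into vertex-disjoint blocking cycles, which is exactly where vertex multiplicity~$2$ is used, since a shared start vertex would force three paths through it. The crux is \emph{independence}: because $r_{i+1}$ is the \emph{first} robot blocking $r_i$, the interior of each cycle path of $C$ contains no foreign robot's source, so the MRMP-GP instance built from $C$'s cycle paths faithfully represents $C$ in isolation. If $C$ has a scout, the explicit schedule solves it; otherwise every cycle-path vertex is shared by exactly two of $C$'s robots, so $G$ is genuinely composed of paths, and I invoke the reduction: \Cref{lemma:untangling} shows each untangling step preserves both the composed-of-paths structure and solvability and terminates in a graph falling under a base case, which \Cref{lemma:basecases-solvable,lemma:basecases-unsolvable} decide. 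When solvable, \Cref{lemma:solved_cycle_clear_path} guarantees that after the cycle robots reach their cycle targets their paths are clear, so they can be pushed to their true targets without blocking any robot of another cycle — restoring the Phase-2 invariant and leaving other cycles untouched, which is what legitimizes solving cycles one at a time.

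Finally I would carry out the running-time accounting against $L$, the total path length. Phase~1 uses per-robot ``first blocker'' pointers so each vertex occurrence is inspected only when the blocker ahead of it departs, giving $O(L)$. For Phase~2, the cycle decomposition and each graph construction are linear, and solving a scout cycle or a base-case graph via Algorithm~\ref{alg:no_cycles} is linear in $|V(G)|$. The step needing care is bounding the untangling: here I would use the block structure, noting via \Cref{lemma:degree11} that exactly the ``knot'' blocks (source of $\tail$'s in-edge equal to target of $\head$'s out-edge) admit an untangle, that non-knot blocks stay non-knot and new knots arise only at the merge of $\prevb(B),B,\nextb(B)$, and that each iteration either advances a $\nextb(\cdot)$ pointer or merges $\ge 2$ blocks. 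This amortizes to $O(|V(G)|)$ per cycle, and since cycles are disjoint the Phase-2 total is $O(L)$, so both phases run in $O(L)$.

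I expect the main obstacle to be the independence/composition argument in Phase~2 — rigorously arguing that the local, per-cycle solvability verdicts compose into a correct global verdict, i.e., that solving one cycle neither consumes vertices needed by another nor is obstructed by a foreign robot. Every link in this chain leans on vertex multiplicity~$2$ (through disjointness of cycles, \Cref{lemma:solved_cycle_clear_path}, and the degree structure of \Cref{lemma:degree11}), so keeping that dependency explicit at each juncture is the delicate part; the remaining bookkeeping and the linear-time claims are routine once the amortized untangling bound is in hand.
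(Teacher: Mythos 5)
Your proposal is correct and follows essentially the same route as the paper: Phase~1 clearing unblocked robots, Phase~2 decomposing stuck robots into vertex-disjoint blocking cycles via VM~$\le 2$, handling scout cycles directly, reducing scoutless cycles to graphs composed of paths, untangling down to the two base cases, and the same block/knot amortization for the linear-time bound. The only cosmetic difference is that you make the solvability-preservation invariant and the per-cycle independence argument more explicit than the paper does, but the underlying lemmas and their roles are identical.
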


\newcommand{\rblocker}{\beta}
\newcommand{\before}[2]{#2 \xleftarrow{} #1}

\newcommand{\dpreced}[3]{\P(#1,#2 \land #3)}
\newcommand{\preced}[2]{\P(#1,#2)}
\newcommand{\select}[2]{\S(#1,#2)}
\newcommand{\exec}[1]{E(#1)}

\newcommand{\convsol}[1]{\C(#1)}

\section{Hardness Results} \label{sec:hardness}

In this section, we present our hardness results, which are as follows:

\begin{theorem} \label{thm:hardness}
    The following restricted variants of MRMP-GP are NP-complete:
    \begin{enumerate}[label=(\roman*)]
        \item
        MRMP-GP-UNI(2, 0), i.e., the workspace is a 2D grid graph with a vertex multiplicity of 2. Furthermore, each agent's path is contained in a single grid row or column.

        \item  MRMP-GP-NBT(4, 1), i.e., the workspace is a 2D grid graph with a vertex multiplicity of 4. Furthermore, each agent's path contains at most one turn.
        
        \item 
        MRMP-GP-NBT(3), i.e., general graphs with vertex multiplicity of 3.
    \end{enumerate}
\end{theorem}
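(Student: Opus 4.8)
The plan is to prove membership in NP and then give polynomial-time reductions from (a suitable variant of) $3$-SAT establishing hardness for each of the three cases. Membership is the easy part: since every robot moves strictly forward along its path and never backward, any motion plan performs at most $L$ moves in total, where $L$ is the sum of the path lengths. Hence a motion plan is a certificate of size polynomial in the input, and verifying it (each move targets an unoccupied vertex, and every robot reaches its target) is a straightforward polynomial-time simulation. So all three variants lie in NP, and the real work is in the hardness reductions.

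For hardness I would reduce from $3$-SAT, using a planar (rectilinear, monotone) variant for the two grid cases (i) and (ii) so that the formula can be laid out on the grid without spurious path crossings, and ordinary $3$-SAT for the general-graph case (iii). The common idea is to encode truth assignments as scheduling choices. For each variable I introduce literal robots, the positive-literal robots $\poslit$ and the negative-literal robots $\neglit$, whose relative order of traversal through a shared "choice" vertex determines the variable's value; a vertex of multiplicity $2$ naturally forces exactly one of two competing robots to commit first, which is precisely a Boolean decision. Each clause is realized by a clause gadget that can be drained safely if and only if at least one of its three literals is set to satisfy it; if all three literals are false, the robots entering the gadget form an unbreakable blocking cycle (a deadlock of the type characterized by \Cref{lemma:basecases-unsolvable} in the $\VM\le 2$ analysis, and an analogous configuration for higher multiplicity). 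Wire gadgets propagate each variable's committed value to the clauses containing it, so a globally consistent satisfying assignment corresponds to a conflict-free schedule, and a completable motion plan can be read back as a satisfying assignment.

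The correctness argument would formalize this correspondence through a precedence relation $\preced{\cdot}{\cdot}$ on the robots induced by a motion plan: I would show that a plan exists iff these constraints are acyclic, and that acyclicity forces a consistent, clause-satisfying orientation at every variable gadget. Concretely, from a satisfying assignment I build an explicit execution $\exec{\cdot}$ (move satisfied-literal robots first, drain each clause gadget, then the rest), and conversely a conversion map $\convsol{\cdot}$ reads off the assignment from the commit order at the choice vertices, showing that an unsatisfied clause forces a deadlock. The three parts then differ only in how the same logical gadgets are instantiated under the parameter budget: for (iii) the three literal-paths of a clause meet at multiplicity-$3$ vertices in a general graph; for (ii) I spend the extra budget (multiplicity $4$, one turn) to realize the gadgets with axis-aligned L-shaped paths on the grid while preserving the non-blocking-targets property; and for (i) everything must be pushed down to multiplicity $2$ with perfectly straight, single-row/column, unidirectional paths.

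The main obstacle is exactly this last layout step for variant (i). With straight paths only, the sole way two robots can interact is where a horizontal path crosses a vertical path at a single cell, and multiplicity $2$ makes every such crossing a purely pairwise conflict; no vertex can host three literals and no path can bend to route a signal. I therefore expect the crux to be engineering variable-, clause-, and wire-gadgets — together with crossover gadgets for the unavoidable crossings of signal lines — entirely out of chains of pairwise horizontal/vertical crossings, while simultaneously respecting unidirectionality (no edge used in both directions). Forcing the inherently ternary clause gadget to deadlock precisely when all three incident literal-signals are false using only binary crossings, and verifying that the induced precedence graph is acyclic exactly for satisfying assignments, is where the bulk of the technical effort will lie.
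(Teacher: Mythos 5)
Your NP-membership argument is fine and matches the paper's (a plan has at most $L$ moves, so it is a polynomial certificate). The hardness part, however, is a plan rather than a proof, and the plan as stated has a genuine gap precisely at the point you identify as the crux. You propose local variable gadgets, clause gadgets that deadlock when all three literals are false, plus wire and crossover gadgets to propagate truth values, all built from pairwise crossings of straight unidirectional paths with $\VM=2$. Two problems. First, you never exhibit these gadgets, and it is far from clear that a ternary ``deadlock iff all three literals are false'' condition, or a signal-propagating wire, can be realized under that budget at all; the deadlock mechanism you invoke (a blocking cycle as in \Cref{lemma:basecases-unsolvable}) belongs to the NBT analysis, where the paper proves $\VM=2$ is \emph{polynomial-time solvable}. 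So for variant (i) the hardness cannot come from blocking cycles alone: it must come from \emph{blocking targets} (robots whose targets sit permanently on other robots' paths), which your proposal never mentions and which are the actual hardness mechanism in the paper's UNI$(2,0)$ gadgets. Second, your three variants are built as three separate instantiations, whereas the tightness claims (e.g.\ that $\VM=3$ is minimal for NBT) require the constructions to be coordinated with the positive results; you do not address this.

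The paper avoids your crux entirely by a different global architecture. It first reduces 3SAT to an intermediate problem (Pivot Scheduling: partition paired jobs into a before-set and an after-set subject to pair constraints and clause constraints), then reduces that to MRMP-GP with precedence constraints using a single \emph{pivot robot} $r^*$ whose straight path crosses every job robot's straight path; a job's truth value is simply whether its robot crosses $r^*$'s row before or after $r^*$ passes. Clause checking is done by a companion robot $\beta$ crossing checker robots, chained to $r^*$ by one precedence gadget. Because the Boolean choice is ``before or after one global event'' rather than a locally stored bit, no wires and no crossover gadgets are needed, and every path stays in a single row or column with $\VM=2$ outside constant-size gadgets. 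Each gadget is then implemented with a constant number of secondary robots using blocking targets. Variants (ii) and (iii) are \emph{derived} from (i): blocking targets are eliminated by extending each blocking robot's path backwards along the blocked robot's path (raising $\VM$ to $4$ and adding one turn), and then $\VM$-$4$ vertices are removed to obtain a general graph with $\VM=3$. To repair your proposal you would either need to actually construct your wire, crossover, and clause gadgets and prove their correctness under the $\VM=2$/straight/unidirectional constraints, or adopt something like the paper's global-pivot ordering, which is what makes the layout problem tractable.
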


Note that in MRMP-GP-UNI we have unidirectional motion while blocking targets are allowed (see \Cref{sec:prob_def} for exact definitions).
In contrast, in MRMP-GP-NBT we do not allow blocking targets, but do allow bi-directional motion, i.e., robots that traverse the same path but in opposite directions.
Therefore, we can conclude that the presence of each of the elements of bi-directional motion or blocking targets by itself suffices to make MRMP-GP intractable.

Since our algorithm from \Cref{sec:main-alg} solves MRMP-GP-NBT(2), the hardness result (iii) for MRMP-GP-NBT(3) is tight.
That is, we establish a tractability frontier based on vertex multiplicity for MRMP-GP-NBT.
We identify additional tractability frontiers through the minimality of parameters as indicated in \Cref{tab:comp}, which is easy to verify.
To see that the turn number of MRMP-GP-NBT(4, 1) is minimal, we can reduce MRMP-GP-NBT(4, 0) to MRMP-GP-NBT(2).%

We prove \Cref{thm:hardness} in stages, establishing result (i), then (ii), and finally (iii).
Here we sketch the proof, focusing on (i); see \Cref{sec:app:hardness} for the rest of the details.
As it is straightforward to verify that MRMP-GP is in NP, we only discuss NP-hardness.

First, we show the NP-hardness of a variant of MRMP-GP, called \textit{MRMP-GP with Precedence Constraints (MRMP-GP-PC}).
MRMP-GP-PC has additional constraints on the order in which robots visit vertices, which we use to abstract away the details of our complete constructions while showing the functionality of their gadgets.
As part of the sketch, we illustrate how to realize the instance constructed by the reduction to MRMP-GP-PC, denoted by $M_s$, on the 2D grid.
In our full proof, we incrementally convert $M_s$, to instances that are equally hard to solve.
That is, each subsequent instance $M'$ we describe is solvable if and only $M_s$ is solvable.

The conversion steps of $M_s$ are as follows.
We first convert $M_s$ to an instance $M$ of MRMP-GP-UNI(2,0).
To realize the gadgets in $M$, we use blocking targets.
In the next stage, we convert $M$ to $M'$, in which we eliminate all the blocking targets.
Such targets are replaced by paths going in opposite directions. %
This stage requires the following mostly local changes:
Let $r,r'$ be two robots, where $\trg{r}$ is a blocking target located at vertex $v$, where $ v \in \pi(r')$.
We extend $\pi(r)$ so that it runs along $\pi(r')$ in the opposite direction. %
The change preserves the constraint that $r$ must visit $v$ before $r'$ in a valid solution, which we use in our gadgets.
The changes increase the VM of $M'$ to 4 and make it an instance of MRMP-GP-NBT(4,1).
Lastly, by carefully removing vertices with VM=4 from $M'$ (which makes it no longer a 2D grid) we get an instance $M''$ of MRMP-GP-NBT(3).

We now begin the sketch, focusing on the hardness of MRMP-GP-UNI(2, 0).
To establish the hardness of MRMP-GP-PC, we introduce a problem called Pivot Scheduling and prove that it is NP-hard.
Next, we reduce Pivot Scheduling to MRMP-GP-PC.

\paragraph{Pivot Scheduling.}
An instance has the form $(V, \C)$, where $V$ is a set of jobs that come in pairs and $\C$ is a set of ordering constraints:
Let $V= \set{x_1, y_1, \ldots, x_n, y_n}$ be a set of $2n$ distinct jobs.
Let $\C = \{C_1, \ldots, C_m\}$, where each $C_j \in V^3$ is a triplet.
The problem\footnote{Pivot Scheduling can be seen as a special case of generalized AND/OR Scheduling, which is NP-hard~\cite{DBLP:journals/siamcomp/MohringSS04}.}
is to determine whether $V$ can be partitioned into a before-set $\bef$ and an after-set $\aft$ (i.e., $\aft \cap \bef = \emptyset$ and $\aft \cup \bef = V$) such that the following constraints are satisfied: (i) for each pair $x_i, y_i$, we have either $x_i \in \bef$ or $y_i \in \bef$ and (ii) for each $C_j \in \C$, one of the jobs in $C_j$ must be in $\aft$, i.e., $C_j \cap \aft \neq \emptyset$.

We call the former constraints \textit{before-constraints} and the latter constraints \textit{after-constraints}.
Intuitively, the before/after constraints implicitly imply the existence of a distinguished \textit{pivot job} with respect to which the input jobs must be ordered.
To be precise, $\bef$ and $\aft$ respectively correspond to the jobs that come before and after the pivot job.

\begin{lemma}
Pivot Scheduling is NP-hard.
\end{lemma}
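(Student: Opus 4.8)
The plan is to give a polynomial-time reduction from 3-SAT. Given a 3-CNF formula $\varphi$ over variables $z_1,\dots,z_n$ with clauses $D_1,\dots,D_m$, I would introduce one pair of jobs $(x_i,y_i)$ per variable $z_i$, thinking of $x_i$ as the positive literal $z_i$ and $y_i$ as the negative literal $\bar z_i$. The guiding dictionary is: a job lands in the after-set $\aft$ exactly when its literal is made \emph{true} (equivalently, a job in $\bef$ is a literal left false). For each clause $D_j=\ell_a\lor\ell_b\lor\ell_c$ I would then form the triplet $C_j \in V^3$ consisting of the three jobs corresponding to $\ell_a,\ell_b,\ell_c$ (a positive literal contributing its $x$-job, a negative literal its $y$-job). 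This yields a Pivot Scheduling instance $(V,\C)$ whose size is linear in $|\varphi|$, so the construction is clearly polynomial.

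The next step is to check that the two constraint families carry their intended Boolean meaning. The before-constraint on a pair $(x_i,y_i)$ — at least one of $x_i,y_i$ in $\bef$ — forbids placing \emph{both} in $\aft$, i.e.\ it forbids declaring $z_i$ and $\bar z_i$ simultaneously true; this is exactly the consistency requirement of a truth assignment. The after-constraint on $C_j$ — that $C_j\cap\aft\neq\emptyset$ — says that at least one literal of $D_j$ is true under the dictionary, i.e.\ that the clause is satisfied.

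For the forward direction (satisfying assignment $\Rightarrow$ Pivot solution), given a satisfying assignment I would place the job of each true literal in $\aft$ and the job of each false literal in $\bef$. Then each pair contributes exactly one job to each of $\bef,\aft$, so all before-constraints hold, and every clause contains a true literal whose job sits in $\aft$, so all after-constraints hold. For the reverse direction (Pivot solution $\Rightarrow$ satisfying assignment), from a valid partition $(\bef,\aft)$ I would set $z_i$ true if $x_i\in\aft$, false if $y_i\in\aft$, and arbitrarily (say true) if both $x_i,y_i\in\bef$. This is well-defined precisely because the before-constraint rules out the case $x_i,y_i\in\aft$. Each triplet $C_j$ contains some job of $\aft$ by the after-constraint, and by construction that job corresponds to a literal the assignment makes true, so $D_j$ is satisfied.

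I expect the only delicate point to be the reverse direction's treatment of the model's slack: a Pivot partition may place \emph{both} members of a pair in $\bef$, a configuration with no direct analogue in a truth assignment (it corresponds to leaving both of a variable's literals false). I would argue this slack is harmless — such a variable is never required to satisfy any clause, since every satisfied after-constraint is witnessed by a job already in $\aft$, and the arbitrary choice made for a both-in-$\bef$ variable cannot unsatisfy a clause whose satisfaction came from an $\aft$-job. Verifying that the after-constraints are the sole source of clause satisfaction, so that these arbitrary choices never interfere, is the one spot that needs care; the rest is a purely syntactic translation.
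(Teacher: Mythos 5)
Your proposal is correct and follows essentially the same route as the paper: a reduction from 3SAT with one job pair per variable, clause triplets as after-constraints, and the dictionary ``job in $\aft$ iff literal is true.'' The paper dismisses the verification as straightforward; your write-up merely spells out the details it omits, including the harmless slack of a pair with both jobs in $\bef$.
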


\begin{proof} %
We present a reduction from 3SAT, the problem of
deciding the satisfiability of a formula in conjunctive normal form with three literals in each clause.
Given a 3SAT formula $\phi$, we define a corresponding instance of Pivot Scheduling $S(\phi) = (V, \C)$ as follows:
Each variable and its negation form a pair of jobs in $V$, i.e., $V = \bigcup_{x \in \phi} \set{x,\ov{x}}$.
As for the set of after constraints $\C$, each clause of $\phi$ defines a constraint, which contains the literals in the clause.
For simplicity, we use the same notation for a job and its corresponding literal.%

To complete the reduction, we define the correspondence between a satisfying assignment $\A$ for $\phi$ and a valid partition of $V$:
A variable $x$ is assigned "true" if and only if the job $x$ is in the after-set, i.e., $x \in \aft$.
In other words, given a satisfying assignment for $\phi$, $\bef$ and $\aft$ correspond to literals assigned "false" and "true", respectively.
It is straightforward to verify that the resulting partition satisfies the constraints if and only if $\A$ satisfies $\phi$.
\end{proof}

\paragraph{Hardness of MRMP-GP-PC.}
Let us now define MRMP-GP-PC.
The input is the same as MRMP-GP except that we also have special vertices, which we will use as gadgets.
Each \emph{gadget vertex} must be traversed (i.e., visited) by the robots passing through it in a certain \emph{traversal order} (the exact constraints used will be specified in the reduction).
A solution to MRMP-GP-PC is the same as for MRMP-GP with the additional requirement that gadget vertices must be visited according to their respective traversal order.

We proceed to the reduction.
Given an instance of Pivot Scheduling $I=(V,\C)$, we construct a corresponding MRMP-GP-PC instance $M_s$. %
We represent each job in $V$ by a corresponding \textit{job robot} in $M_s$.
We also represent the implicit pivot job by the \textit{pivot robot} $r^*$.
To simplify notation, we use the same name for a job and its corresponding robot.
Another robot is $\rblocker{}$, which can be thought of as a robot continuing the journey of $r^*$.
Lastly, for each constraint $C_j = \set{z_1, z_2, z_3} \in V^3$, we have three corresponding \textit{checker} robots $c^1_j, c^2_j, c^3_j$, which we use to emulate after-constraints.

We now discuss the gadgets in $M_s$. %
For each $i \in [n]$ we have a \emph{before-constraint gadget}, denoted by $B_i$, which appears on the paths of $r^*$, $x_i$ and $y_i$.
The traversal order for $B_i$ is defined as having either $x_i$ or $y_i$ traverse $B_i$ before $r^*$ does.
For each $j \in [m]$ we have an \emph{after-constraint gadget}, denoted by $A_j$, which appears on the paths of $\rblocker{}$ and the checker robots $c^1_j, c^2_j, c^3_j$.
The traversal order for $A_j$ is defined as having one of $c^1_j, c^2_j, c^3_j$ traverse $A_j$ after $\rblocker{}$ does.
The last gadget type we use is the \emph{precedence gadget}, denoted by $\preced{r}{r'}$, which must be visited first by $r$ and then by $r'$, where $r$ and $r'$ are arbitrary robots.
The particular instances of this gadget that we use are specified below.

We now describe the order of gadgets along the paths.
The path $\pi(r^*)$ first passes through all before-constraint gadgets (in arbitrary order) and then passes through the precedence gadget $\G \coloneqq \preced{r^*}{\rblocker{}}$.
The path $\pi(\rblocker{})$ first passes through $\G$, and then through all after-constraint gadgets (in arbitrary order).
Since $\G$ is the last gadget along $\pi(r^*)$ and the first gadget along $\pi(\rblocker{})$, $\rblocker{}$ can essentially only start moving after $r^*$ reaches its target.
Now let $c^{\ell}_j$ be a checker robot, which corresponds to the job $z$ in the constraint $C_j$.
The robot $c^{\ell}_j$ first passes through $A_j$ and then through the precedence gadget $\preced{c^{\ell}_j}{z}$.
This means that before a job robot enters its respective $B_i$ gadget, all the checker robots corresponding to the job must first traverse their respective $A_j$ gadget.

\paragraph{Robots and path placements.}
An example of $M_s$ is shown in \Cref{fig:hardness_overall}, which has a dual purpose of illustrating the conversion of $M_s$ to the grid instance $M$.
The figure should be interpreted as follows.
Each (long and colored) arrow represents a path lying in a single grid row/column and the rectangles are gadgets.
For $M$, each rectangle indicates the placement of a gadget, which contains a constant number of additional robots (not shown).
For $M_s$, each gadget is a vertex, so a robot's path can be thought of as going off the grid for one vertex to visit the gadget.
Note that vertex multiplicity outside of gadgets is indeed 2.

We realize $M$ on the 2D grid as follows.
The top row of $M$ initially contains the job robots $x_1, y_1, \ldots, x_n, y_n$, ordered left to right, which have to move down to the bottom row of $M$.
The rightmost column of $M$ initially contains the checker robots $c^1_1, c^2_1, c^3_1, \ldots, c^1_m, c^2_m, c^3_m$, ordered top to bottom, which have to move left to the leftmost column of $M$.
The pivot robot $r^*$ is initially located near the bottom row and has to go from the leftmost column to the rightmost column of $M$.
The robot $\rblocker$ is initially located near the rightmost column and has to go from the bottom row to the top row of $M$.

\begin{figure}[t]
  \centering
  \includegraphics[page=1, width=0.28\textwidth]{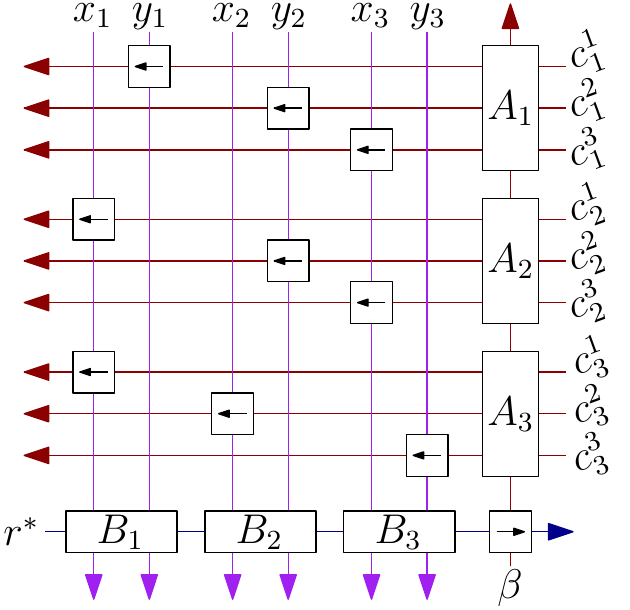}
  \caption{An MRMP-GP-PC instance corresponding to Pivot Scheduling with $V=\set{x_1,y_1,\ldots,x_3,y_3}$ and $\C=\set{\set{y_1,y_2,x_3}, \set{x_1,y_2,x_3}, \set{x_1,x_2,y_3}}$.
The robots' paths are shown as long arrows.
Precedence gadgets are shown as squares containing a short arrow, which is oriented along the path of the robot that needs to traverse the gadget first.
}
   \label{fig:hardness_overall}
\end{figure}

We now turn to the correctness of the reduction.
We have the following correspondence between a partition $(B,A)$ for $I$ and a solution to $M_s$: A job of an $x_i, y_i$ pair is in the before-set $B$ if and only if the corresponding job robot traverses $B_i$ before $r^*$ does.
Now let $(B', A')$ be a partition obtained from a solution to $M_s$, as just defined.
Clearly, by the definition of the $B_i$'s, $B'$ satisfies the before-constraints of $I$.
We now prove that after constraints are satisfied by $A'$:

\begin{lemma}
Let $C_j$ be an after-constraint of $I$ and let $R(C_j)$ denote the corresponding job robots in $M_s$.
Then one of the robots of $R(C_j)$ traverses its respective before-constraint gadget after $r^*$.
\end{lemma}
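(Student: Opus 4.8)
The plan is to prove the lemma by chaining together the temporal precedences enforced by the gadgets together with the fixed order in which gadgets appear along each path, using the precedence gadget $\G=\preced{r^*}{\rblocker}$ as a temporal barrier that separates all activity of $r^*$ from all activity of $\rblocker$. Throughout I would fix a valid solution to $M_s$ and, for a robot $r$ and a gadget vertex $g$ on $\pi(r)$, write $\tau(r,g)$ for the unique time step at which $r$ visits $g$; this is well-defined since each path is simple, and every constraint I invoke is a strict inequality between such times, so the links combine by transitivity.

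The crucial step is the barrier argument. Since $\G$ is the last gadget on $\pi(r^*)$, we have $\tau(r^*,B_i)<\tau(r^*,\G)$ for every before-constraint gadget $B_i$; the gadget $\G=\preced{r^*}{\rblocker}$ forces $\tau(r^*,\G)<\tau(\rblocker,\G)$; and since $\G$ is the first gadget on $\pi(\rblocker)$, we have $\tau(\rblocker,\G)<\tau(\rblocker,A_j)$ for every after-constraint gadget $A_j$. Hence $r^*$ finishes every one of its before-constraint gadgets strictly before $\rblocker$ reaches any after-constraint gadget.

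I would then close the loop through the after-constraint gadget and the checker/job precedence. By the stated traversal order of $A_j$, at least one checker robot $c^{\ell}_j$ traverses $A_j$ after $\rblocker$, so $\tau(\rblocker,A_j)<\tau(c^{\ell}_j,A_j)$. Let $z\in R(C_j)$ be the job robot corresponding to $c^{\ell}_j$ and let $B_i$ be its before-constraint gadget. Following the path order of $c^{\ell}_j$ (which visits $A_j$ before $\preced{c^{\ell}_j}{z}$), then the precedence gadget $\preced{c^{\ell}_j}{z}$ (which forces $c^{\ell}_j$ before $z$), then the path order of $z$ (which visits $\preced{c^{\ell}_j}{z}$ before $B_i$), I obtain $\tau(c^{\ell}_j,A_j)<\tau(z,B_i)$. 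Concatenating this with the barrier inequalities yields $\tau(r^*,B_i)<\tau(z,B_i)$, which is exactly the assertion that the job robot $z\in R(C_j)$ traverses its before-constraint gadget after $r^*$.

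I expect the only delicate part to be bookkeeping: confirming that each individual link is genuinely forced either by a gadget's stated traversal order or by the specified order of gadgets along the relevant path, and that $\tau$ is unambiguous because no robot revisits a gadget. There is no combinatorial obstacle beyond reading these orderings off the construction correctly; the entire weight of the argument rests on the single barrier through $\G$, which is precisely what converts an after-constraint being satisfied in the schedule (some checker following $\rblocker$) into some job robot of $R(C_j)$ being forced to traverse its before-constraint gadget late, i.e.\ after $r^*$.
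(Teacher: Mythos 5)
Your proof is correct and rests on exactly the same precedence chain through the barrier gadget $\G=\preced{r^*}{\rblocker}$ that the paper uses; the only difference is that you run the chain forward as a direct transitive argument (picking the checker that traverses $A_j$ after $\rblocker$ and following it to its job robot), whereas the paper argues by contradiction from the assumption that all robots of $R(C_j)$ traverse their before-constraint gadgets before $r^*$. This is a stylistic rather than substantive difference.
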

\begin{proof}
Assume for a contradiction that each robot in $R(C_j)$ traverses its respective before-constraint gadget before $r^*$ does.
Let $r$ be the last robot of $R(C_j)$ to do so and let $B_i$ be the gadget it traverses.
We now examine the time step in which $r$ is at $B_i$.
At this point, $r^*$ is located to the left of $B_i$ while all the checker robots of $C_j$ have already traversed the gadget $A_j$ due to the precedence gadgets.
Consequently, $\rblocker{}$ must have already traversed $A_j$, as it cannot be the last robot to traverse the $A_j$, by its definition.
In particular, $\rblocker{}$ already traversed the precedence gadget $\G = \preced{r^*}{\rblocker}$ (bottom right in \Cref{fig:hardness_overall}).
This is a contradiction since $r^*$ must traverse $\G$ before $\rblocker$.
\end{proof}

Given a valid job partition, it is easily verified that the correspondence above directly lends itself to an ordering of the robots of $M_s$ by which they can move to their targets one by one. Therefore, we obtain the following:

\begin{lemma}
MRMP-GP-PC is NP-complete.
\end{lemma}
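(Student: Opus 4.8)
The plan is to establish both membership in NP and NP-hardness, the latter resting on the correctness of the reduction from Pivot Scheduling to MRMP-GP-PC whose output $M_s$ was just described. Membership in NP is immediate for the same reason as for MRMP-GP: since no backward moves are allowed and each robot $r$ makes at most $|\pi(r)|$ moves, any solution has at most $L=\sum_{r}|\pi(r)|$ moves, so a move sequence is a polynomial-size certificate whose validity (collision-freeness together with the prescribed traversal order at every gadget vertex) is checkable in polynomial time. As $M_s$ has size polynomial in the Pivot Scheduling instance $I=(V,\C)$ and Pivot Scheduling is NP-hard, it remains to prove the equivalence: $I$ admits a valid partition if and only if $M_s$ is solvable.

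For the backward direction I would take a solution to $M_s$ and read off the induced partition $(\bef',\aft')$ exactly as defined, placing a job in $\bef'$ precisely when its job robot traverses its before-constraint gadget before $r^*$. The before-constraints hold by construction of the $B_i$ gadgets, since each pair contributes whichever of $x_i,y_i$ crosses $B_i$ first. The after-constraints hold directly by the preceding lemma, which guarantees that for every $C_j$ at least one corresponding job robot traverses its $B_i$ gadget after $r^*$ and hence lands in $\aft'$.

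The forward direction carries the real work: from a valid partition $(\bef,\aft)$ I would exhibit an explicit order in which the robots reach their targets one at a time. For each $C_j$ I first fix a job $z_j\in C_j\cap\aft$ (which exists because $\aft$ meets every after-constraint) and designate the unique checker of $C_j$ pointing at $z_j$ as the \emph{late} checker, with all remaining checkers \emph{early}. The schedule then runs in phases: (1) route every early checker through its $A_j$ gadget and its precedence gadget to its target; (2) send every job robot whose job lies in $\bef$ through its $B_i$ gadget to its target; (3) drive $r^*$ across all $B_i$ gadgets and through $\G$; (4) drive $\rblocker$ through $\G$ and all $A_j$ gadgets; (5) route every late checker through its $A_j$ gadget and precedence gadget; and (6) move every remaining ($\aft$) job robot to its target. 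I would then verify that each gadget's traversal order is honored: in every $B_i$ the $\bef$-job(s) precede $r^*$ (phases 1--2 versus 3) while the at most one $\aft$-job of the pair follows it (phase 6), and since the partition is valid at least one of $x_i,y_i$ lies in $\bef$; in every $A_j$ the late checker is the unique one crossing after $\rblocker$ (phases 1 and 4 versus 5); the gadget $\G=\preced{r^*}{\rblocker}$ is honored across phases 3--4; and each $\preced{c^\ell_j}{z}$ is honored because the checker is always moved before its target job (an early checker in phase 1 points at a $\bef$-job moved in phase 2 or an $\aft$-job moved in phase 6, and a late checker in phase 5 points at $z_j\in\aft$ moved in phase 6).

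The main obstacle I anticipate is not this logical ordering but confirming the \emph{physical} feasibility of the one-at-a-time schedule, namely that an already-parked robot never blocks a later one. I would argue from the layout of $M_s$ --- sources and targets on the grid boundary, with the only shared positions being the gadget vertices and the vertex-multiplicity-two crossings --- that within each phase the robots can be sequenced so that each reaches its target along an otherwise clear remainder of its path. Under the forward-only, single-robot dynamics this amounts to checking that the chosen phase order induces no cyclic blocking among parked robots, which the boundary placement of sources and targets makes routine. Combining the two directions with polynomiality of the reduction and NP membership then yields that MRMP-GP-PC is NP-complete.
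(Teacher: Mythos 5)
Your proposal is correct and follows essentially the same route as the paper: NP membership via polynomially bounded move sequences, the backward direction via the induced partition together with the preceding lemma on after-constraints, and the forward direction via a one-robot-at-a-time ordering respecting all gadget traversal orders. The paper dispatches the forward direction with "it is easily verified that the correspondence \ldots directly lends itself to an ordering of the robots"; your explicit six-phase schedule (early checkers, $\bef$-jobs, $r^*$, $\rblocker$, late checkers, $\aft$-jobs) is a faithful and more detailed instantiation of exactly that claim, and your flagged concern about physical non-blocking of parked robots is handled at the same level of (in)formality in the paper itself.
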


The complete realization of $M$ on the 2D grid as well as the subsequent steps proving \Cref{thm:hardness} are given in \Cref{sec:app:hardness}.

\section{Conclusion}
We gave a refined complexity analysis of MRMP-GP that sheds new light on the problem's sources of difficulty.
A key element of previous MRMP-GP hardness constructions is paths that traverse the same set of vertices in opposite directions (e.g., a given path would contain the sequence $v_1,v_2,v_3$ while another path would contain $v_3,v_2,v_1$).
We show that hardness can arise even without such paths if instead we have a different element, which is blocking targets.
This observation leads to an intriguing question, which is whether the MRMP-GP remains hard when neither elements are present.
A positive answer could have implications for fixed-path robot/transport systems, %
which could be designed to avoid the aforementioned elements.

From the perspective of parameterized complexity~\cite{cygan2015parameterized}, which is a research avenue for hard motion planning problems~\cite{DBLP:conf/atal/SalzmanS20},
our hardness results rule out candidate parameters.
Namely, by showing that hardness remains even for a constant vertex multiplicity (VM) we prove that MRMP-GP is unlikely to be fixed-parameter tractable (FPT) when parameterized by VM.
The same statement holds for path shape complexity.
Therefore, we guide the search for parameterized algorithms toward other parameters.
We believe that our hardness constructions more vividly expose potential parameters since our results hold for highly distilled MRMP-GP formulations.

A natural extension of MRMP-GP is optimizing the solution, e.g., its makespan.
Such an optimization variant is closely related to Multi-Agent Path Finding~\cite{DBLP:conf/socs/SternSFK0WLA0KB19}, where its use as a subroutine may have potential.
For example, in each high-level node of the popular Conflict-Based Search algorithm~\cite{DBLP:journals/ai/SharonSFS15} a path is found for each agent, which may be viewed as fixed for that node.
Hence, a fast algorithm for the optimization variant of MRMP-GP might improve lower bounds for the cost of a high-level node, thus better guiding the search.
We believe that our new insights for deciding feasibility provide a better foundation for such future directions.

\begin{acks}
M.~Abrahamsen is supported by Starting Grant 1054-00032B from the Independent Research Fund Denmark under the Sapere Aude research career program and is part of Basic Algorithms Research Copenhagen (BARC), supported by the VILLUM Foundation grant 16582.
Work on this paper by T.~Geft and D.~Halperin has been supported in part by the Israel Science Foundation (grant no.~1736/19), by NSF/US-Israel-BSF (grant no.~2019754), by the Israel Ministry of Science and Technology (grant no.~103129), by the Blavatnik Computer Science Research Fund, and by the Yandex Machine Learning Initiative for Machine Learning at Tel Aviv University.
T.~Geft has also been supported by scholarships from the Shlomo Shmeltzer Institute for Smart Transportation at Tel Aviv University and the Israeli Smart Transportation Research Center.
\end{acks}

\bibliography{references.bib}
\balance
\bibliographystyle{ACM-Reference-Format}

\pagebreak
\appendix

\section{Missing details of our algorithm}
\label{sec:appendix}

\subsection{Solving a cycle with a scout robot} \label{sec:scout-cycle-algo}
Let $r_j$ be a scout robot in a blocking cycle $C$.
That is, there is a vertex $p$ on $r_j$'s cycle path that is not contained in the cycle path of any other robot in $C$.
We solve $C$ by moving $r_j$ to $p$, and then moving each robot in $r_{j-1},r_{j-2},\dots,r_0, r_{h-1}, r_{h-2} \ldots,r_{j}$ to the next robot's start vertex (in this order).
See Algorithm~\ref{alg:scout-cycle} for pseudo-code and \Cref{fig:cycle-with-scout} for an example.

\begin{algorithm} 
    \caption{Solving a cycle with a scout robot $r_j$ }
    \label{alg:with_gap}
    \label{alg:scout-cycle}
    Move $r_j$ to $p$\;
    \For{$i=j-1,\ j-2,\ \ldots,\ 0,\ h-1,\ h-2,\ \ldots,\ j$} {
        Move $r_i$ to $\src{r_{i+1}}$\;
    }
\end{algorithm}

\begin{figure}[h]
  \centering
  \includegraphics[page=1, width=0.46\textwidth]{}
  \caption{A blocking cycle with a scout robot $r_3$, which is solved as described in the main text.
  The initial state is on the top left configuration and the bottom right state shows the solved cycle.
  First $r_3$ is moved to a vertex that does not appear on any other robot's path.
  Then all other robots are moved sequentially.}
  \label{fig:cycle-with-scout}
\end{figure}

\begin{lemma}
Let $C$ be a cycle containing a scout robot $r_j$, i.e., $r_j$ has vertex $p$ on its cycle path that is not contained in the cycle path of any another robot in $C$. Then Algorithm~\ref{alg:scout-cycle} solves $C$.
\end{lemma}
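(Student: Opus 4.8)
The plan is to verify two things about Algorithm~\ref{alg:scout-cycle}: that every move it issues is \emph{legal} (the destination vertex is unoccupied at the time of the move), and that upon termination every robot $r_i$ in $C$ has reached its cycle target $\src{r_{i+1}}$. The key structural fact I would lean on is the defining property of the scout $r_j$: the vertex $p$ on its cycle path is not contained in the cycle path of any other robot in $C$. This immediately makes the first move legal, since $p$ lies on no other robot's cycle path and the invariant from Phase~2 guarantees that every robot in $C$ starts exactly on its source vertex, so no robot occupies $p$ initially.

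For the remaining moves I would argue by following the loop order $i = j-1, j-2, \ldots, 0, h-1, \ldots, j$, which traverses the cycle backwards starting from the robot just \emph{behind} the scout. The central observation is that when we come to move $r_i$ to $\src{r_{i+1}}$, the robot $r_{i+1}$ has (in this backward order) \emph{already} been moved off its source to $\src{r_{i+2}}$, so $\src{r_{i+1}}$ is now vacant. I would make this precise with an induction over the loop iterations: maintain the invariant that after processing index $i$, robots $r_{i}, r_{i+1}, \ldots$ (the suffix already handled, reading in the loop's order) all sit on their cycle targets, while the not-yet-handled robots remain on their sources. The one subtlety is that $\src{r_{i+1}}$ could in principle also lie on the cycle path of a robot \emph{other} than $r_i$ and $r_{i+1}$; here I would invoke the vertex-multiplicity-2 assumption exactly as in the proof of \Cref{lemma:solved_cycle_clear_path}: $\src{r_{i+1}}$ already appears on $\pi(r_i)$ and $\pi(r_{i+1})$, so by VM $\le 2$ no third robot's path passes through it, and thus the only robot that could have blocked $\src{r_{i+1}}$ is $r_{i+1}$ itself, which has already vacated.

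A point needing a little care is that "moving $r_i$ to $\src{r_{i+1}}$" is not a single edge but traversal of the entire cycle path of $r_i$, so I must check that every intermediate vertex along $r_i$'s cycle path is free when $r_i$ traverses it. For this I would note that the scout has been pushed forward to $p$ and then each robot is pulled along the freed corridor in turn, so at the moment $r_i$ advances, the only robots still on their sources are those with indices not yet reached in the loop, and the corridor ahead of $r_i$ up to $\src{r_{i+1}}$ has been cleared by the already-moved $r_{i+1}$ (whose cycle path is exactly that corridor). Again VM $\le 2$ rules out any foreign robot sitting on an interior vertex of $r_i$'s cycle path, since such a vertex already lies on two cycle paths ($r_i$'s and its neighbor's).

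The main obstacle I anticipate is the bookkeeping of the wrap-around in the loop order, i.e.\ handling the indices modulo $h$ cleanly and confirming that the \emph{last} robot processed, namely $r_j$ itself, can still reach $\src{r_{j+1}}$: by that point $r_{j+1}$ has been moved (it is the second-to-last processed, at index $j+1$ if $j+1 \le h-1$, or handled early in the list), so its source is free, and $r_j$ simply completes the loop. I would close by observing that once every $r_i$ is at $\src{r_{i+1}}$, the cycle is solved in the sense defined before \Cref{lemma:solved_cycle_clear_path}, and correctness follows. I do not expect any genuinely hard step; the argument is essentially a careful legality check driven by the scout's uniqueness and the VM $\le 2$ hypothesis.
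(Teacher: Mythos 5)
Your argument is correct and follows essentially the same route as the paper's proof: advance the scout to $p$, then sweep backwards around the cycle, using the vertex-multiplicity-$2$ bound to argue that each already-moved robot now sits on a vertex (some $\src{r_{k+1}}$, which already lies on $\pi(r_k)$ and $\pi(r_{k+1})$) that cannot lie on $\pi(r_i)$, while each not-yet-moved robot still sits on its source and therefore blocks only its predecessor. One sentence---that the corridor ahead of $r_i$ ``has been cleared by the already-moved $r_{i+1}$ (whose cycle path is exactly that corridor)''---misidentifies $r_{i+1}$'s cycle path (that robot only ever occupied the corridor's endpoint $\src{r_{i+1}}$), but the VM-$2$ sentence that follows is the correct justification, so this is a cosmetic slip rather than a gap.
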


\begin{proof}
Before moving any robot, each robot $r_i$ has a single other robot on its cycle path, and that is $r_{i+1}$, by the definition of the cycle.
The motion of the scout robot $r_j$ to $p$ is valid as the only robot in $r_j$'s cycle path is $r_{j+1}$ which is at the last vertex on the path, after $p$.
Before any robot $r_i$ is moved, any robot $r_k,i+1 \leq k \leq j-1$ was moved to $\src{r_{k+1}}$, and due to vertex multiplicity of 2, the vertex $\src{r_{k+1}}$ is included in at most two robots paths, $r_k,r_{k+1}$, and it is not included in the path of $r_i$, therefore $r_k$ will not block $r_i$.
The scout robot will not block $r_i$ as $p$ is only included in the cycle path of $r_j$.
Any robot $r_k,j+1 \leq k \leq i-1$ is at its original position, blocking only $r_{k-1}$ and not $r_i$.
Therefore the motion of $r_i$ to $\src{r_{i+1}}$ is not blocked by any robot.
After iterating over all the robots, the algorithm moves each robot $r_i$ to $\src{r_{i+1}}$ and by definition solves the cycle.
\end{proof}

\subsection{Proof of the unsolvable base case} \label{sec:app:base}
\unsolvablebase*
\begin{proof}
Consider such a cycle $C=v_0,\ldots,v_{k-1}$.
In the corresponding instance of MRMP-GP, $I(C)$, we have a robot $r_i$ on each vertex $v_i$.
We claim that for all $i$, the path for $r_i$ starts with an edge from $v_i$ to $v_{i+1}$.
Otherwise, using the properties of graphs that are composed of paths, it is easy to conclude that the vertex $v_i$ appears on at least three different paths, which is not possible.
We therefore have that $r_i$ cannot move, as it is only allowed to move to $v_{i+1}$ according to its path, which is occupied by $r_{i+1}$.
In other words, the cycle forms a deadlock and there is no solution.
\end{proof}

\subsection{Efficient implementation for straight paths on 2D grid graphs}
When solving the more specific MRMP-GP-NBT(2) variant of robots moving on a 2D grid, where each robot's path is contained in a single row or column, we achieve a running time that depends only on the number of robots.

\begin{theorem}
MRMP-GP-NBT(2,0), i.e., the case where $W$ is a 2D grid and each robot's path is contained in a single row or column of $W$, has an algorithm that finds a solution, or reports that no solution exists, in time $O(n^2)$, where $n$ is the number of robots. %
\end{theorem}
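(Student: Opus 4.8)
The plan is to reuse the general linear-time algorithm from the previous theorem, but to observe that on the 2D grid with straight paths the entire computation can be driven by the $n$ robots alone, without ever touching the individual grid vertices explicitly. The key observation is that a straight path (lying in a single row or column) is completely determined by its two endpoints and its orientation, so a robot's path can be stored in $O(1)$ space. Since $\VM \le 2$, each grid vertex lies on at most two paths, and a vertex shared by two paths is precisely an intersection of a horizontal and a vertical segment (or an overlap of two collinear segments). Thus the combinatorial structure we care about, namely which robot blocks which, is entirely captured by pairwise relations among the $n$ paths, of which there are $O(n^2)$.

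First I would show how to compute the blocking relation in $O(n^2)$ time: for each ordered pair of robots $(r,r')$ I can decide in $O(1)$ time, from the endpoint coordinates, whether the current position of $r'$ lies on the remainder of $\pi(r)$, and if so at which coordinate. From this I can determine, for each robot, the first robot that blocks it, which is all Phase~1 and the cycle-detection part of Phase~2 require. Phase~1 then moves robots with a clear path to their targets; each such move can trigger at most a re-examination of the robots it was blocking, and since all bookkeeping is per pair of robots, the total cost is $O(n^2)$ rather than the $O(L)$ bound of the general algorithm. The cycles of Phase~2 partition the non-finished robots, and each cycle is handled independently.

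The main point requiring care is solving a single cycle in time polynomial in its number of robots rather than in its total path length. The general algorithm constructs the graph $G$ composed of paths, whose vertex set is a subset of the grid vertices on the cycle paths, and then performs untangling operations vertex by vertex. Here I would argue that, for straight paths, $G$ admits a compact representation in which each of the (at most $n$) labeled paths is stored by its endpoints, and the relevant vertices are only the $O(n^2)$ pairwise intersection points of the cycle paths. Crucially, the untangling operations, scout detection, and the base-case tests of \Cref{lemma:basecases-solvable,lemma:basecases-unsolvable} all depend solely on the incidence structure at these intersection vertices and at the start vertices $V_s(G)$, of which there are $O(n)$. Hence a cycle on $n_C$ robots can be processed in time polynomial in $n_C$, and because the cycles are disjoint the total over all cycles is $O(n^2)$.

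The hard part will be verifying that the block structure and the sequence of untangling operations of Algorithm~\ref{alg:no_cycles} can genuinely be executed on the compact (endpoint-based) representation without expanding each path into its grid vertices; in particular, I must confirm that a block of consecutive collinear start vertices, and the ``knot''-detection test used in the running-time analysis, can each be identified and updated in $O(1)$ or $O(n)$ time using only the coordinate data. I expect this to reduce to showing that advancing a robot ``as much as possible'' (line~\ref{alg2:line8}) corresponds to jumping directly to the next intersection coordinate on its straight path, which is an $O(1)$ coordinate comparison. Once these implementation details are in place, correctness is inherited verbatim from the earlier analysis (Lemmas~\ref{lemma:solved_cycle_clear_path}, \ref{lemma:basecases-solvable}, \ref{lemma:basecases-unsolvable}, and \ref{lemma:untangling}), since the grid instance is merely a special case of MRMP-GP-NBT(2), and only the running-time bound changes.
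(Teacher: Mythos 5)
Your proposal is correct and follows essentially the same route as the paper: represent each straight path by its endpoints, compute the pairwise blocking/intersection structure in $O(n^2)$ time, and then run the two phases of the general algorithm on this compact representation, with correctness inherited from the MRMP-GP-NBT(2) analysis. The only cosmetic difference is that the paper implements Phase~1 with a plane sweep maintaining the (at most four) visible robots of each robot rather than your brute-force pairwise check, but both fit within the $O(n^2)$ budget, and the paper is no more detailed than you are about why the untangling and block operations run on the compact graph.
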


In case the workspace is such a grid, the input can be given in a more compact way by representing each row/column path by its endpoints (rather than listing all the vertices on the path).
In this case, the algorithm's running time can depend only on the number of robots, rather than on the paths' lengths.

To implement Phase~1 efficiently, a plane sweep can be used to calculate the $4$ (or less) visible robots of each robot in $O(n \log n)$, and the query if a robot has a clear path to its target can be answered in $O(1)$.
Once the algorithm moves a robot $r$, w.l.o.g a robot that moves in a single row, we update the visible robots of the two visible robots of $r$ that move on the same column in constant time.
To implement Phase~2 efficiently, we can calculate all intersections of any two robots in $O(n^2)$ time.
After computing all intersections, the graph construction and all the operations on the graph can be done in linear time.
The total running time is $O(n^2)$ in the worst case.

\pagebreak

\subsection{Additional examples}
In this section, we give a few more examples demonstrating the operation of our algorithm.

\begin{figure}[H] \centering
    \begin{subfigure}{0.46\textwidth} \centering
        \includegraphics[page=1, width=1\textwidth]{}
        \caption{} \label{}
    \end{subfigure}
    \hspace*{\fill}
    \vspace{5mm}
    \begin{subfigure}{0.23\textwidth} \centering
        \includegraphics[page=9, width=1\textwidth]{figures/complex.pdf}
        \caption{} \label{}
    \end{subfigure}
    \hspace*{\fill}
    \begin{subfigure}{0.23\textwidth} \centering
        \includegraphics[page=10, width=1\textwidth]{figures/complex.pdf}
        \caption{} \label{}
    \end{subfigure}
    \hspace*{\fill}
    \begin{subfigure}{0.23\textwidth} \centering
        \includegraphics[page=11, width=1\textwidth]{figures/complex.pdf}
        \caption{} \label{}
    \end{subfigure}
    \hspace*{\fill}
    \begin{subfigure}{0.23\textwidth} \centering
        \includegraphics[page=12, width=1\textwidth]{figures/complex.pdf}
        \caption{} \label{fig:complex_scene_contracted}
    \end{subfigure}
    \hspace*{\fill}
    \caption{\label{fig:complex_scene}
    (a) A slightly complicated instance of MRMP-GP-NBT(2), which has a single blocking cycle that requires a few untangle operations to be solved. 
    (b) The corresponding graph $G$.
    (c) $G$ after an untangle operation, $G_1$.
    (d) $G$ after two untangle operations, $G_2$.
    (e) $G$ after three untangle operations, beyond which no such operations can be done, $G_3$.}
\end{figure}

\begin{figure}[H]



 \centering
    \begin{subfigure}{0.23\textwidth} \centering
        \includegraphics[page=2, width=1\textwidth]{figures/complex.pdf}
        \caption{} \label{}
    \end{subfigure}
    \vspace{5mm}
    \hspace*{\fill}
    \begin{subfigure}{0.23\textwidth} \centering
        \includegraphics[page=3, width=1\textwidth]{figures/complex.pdf}
        \caption{} \label{}
    \end{subfigure}
    \vspace{5mm}
    \hspace*{\fill}
    \begin{subfigure}{0.23\textwidth} \centering
        \includegraphics[page=4, width=1\textwidth]{figures/complex.pdf}
        \caption{} \label{}
    \end{subfigure}
    \hspace*{\fill}
    \begin{subfigure}{0.23\textwidth} \centering
        \includegraphics[page=5, width=1\textwidth]{figures/complex.pdf}
        \caption{} \label{}
    \end{subfigure}
    \hspace*{\fill}
    \begin{subfigure}{0.23\textwidth} \centering
        \includegraphics[page=6, width=1\textwidth]{figures/complex.pdf}
        \caption{} \label{}
    \end{subfigure}
    \hspace*{\fill}
    \begin{subfigure}{0.23\textwidth} \centering
        \includegraphics[page=7, width=1\textwidth]{figures/complex.pdf}
        \caption{} \label{}
    \end{subfigure}
    \hspace*{\fill}
    \begin{subfigure}{0.23\textwidth} \centering
        \includegraphics[page=8, width=1\textwidth]{figures/complex.pdf}
        \caption{} \label{}
    \end{subfigure}
    \hspace*{\fill}
    \caption{\label{} After computing $G_3$ of the cycle of Figure~\ref{fig:complex_scene} as much as possible as shown in Figure~\ref{fig:complex_scene_contracted}, a motion can be calculated using Algorithm~\ref{alg:no_cycles}. The blocks of robots are $B_1=(r_0,r_1), B_2=(r_2), B_3=(r_3,r_4,r_5,r_6,r_7,r_8), B_4=(r_9), B_5=(r_{10},r_{11},r_{12},r_{13},r_{14},r_{15}), B_6=(r_{16})$. (a) The first block is chosen, $B=B_1$, and $\head(B_1)$ is moved. (b) The robots of $\nextb(B_1)$, $B_2$, are moved head to tail, and than $\head(B_1)$ is moved again. (c) The robots of $\nextb(B_2)$, $B_3$, are moved head to tail, and than $\head(B_2)$ is moved. (d,e,f) The process is repeated for the remaining blocks. (g) The cycle is solved, all robots have clear paths to their targets. }
\end{figure}

\begin{figure}[h] \centering  
  \begin{subfigure}{0.14\textwidth} \centering
    \includegraphics[page=1, width=1\textwidth]{}
    \caption{} \label{}
  \end{subfigure}
  \hspace*{\fill}
  \begin{subfigure}{0.14\textwidth} \centering
    \includegraphics[page=2, width=1\textwidth]{figures/bidirection.pdf}
    \caption{} \label{}
  \end{subfigure}
  \hspace*{\fill}
  \begin{subfigure}{0.14\textwidth} \centering
    \includegraphics[page=3, width=1\textwidth]{figures/bidirection.pdf}
    \caption{} \label{}
  \end{subfigure}
  \caption{ 
  An example showing how the algorithm handles a cycle containing edges that have to be traversed in opposite directions.
  (a) A partial view of a blocking cycle where two robots, $r_0,r_1$, need to traverse the same edge in opposite directions.
  (b) A partial view of the corresponding graph $G$ that the algorithm constructs (assuming there is no scout robot in the cycle).
  (c) A partial view of $G$ after untangling the latter cycle, which enforces the constraint that $r_1$ must moved before $r_0$. } %
  \label{fig:opposite-example}
\end{figure}

\newpage

\section{Further proof details for \Cref{thm:hardness}} \label{sec:app:hardness}
In this section, we continue our proof from \Cref{sec:hardness}.

\subsection{Hardness of MRMP-GP-UNI(2,0)}

We establish the NP-hardness of MRMP-GP-UNI(2, 0) by converting the MRMP-GP-PC instance $M_s$ to an MRMP-GP-UNI(2, 0) instance $M$.
This is done by providing for each gadget $\G$ in $M_s$ a corresponding implementation on the grid, denoted by $M(\G)$, in $M$.
$M$ contains the same set of robots as $M_s$, which we call the \emph{primary} robots (for both instances). $M$ also contains \emph{secondary} robots, which are those initially located in the implemented gadgets.
Note that for these robots we reuse robot names across gadgets (even though the robots are different), but the same gadget always contains unique names.
An illustration of the complete instance $M$ is shown in \Cref{fig:hardness:full-uni}. %
Note that each gadget needs to be aligned with the paths passing through it, as indicated in each gadget's individual figure.

To prove hardness we will show that $M_s$ is solvable if and only if $M$ is solvable.
To this end, let $S_s$ and $S$ denote solutions to $M_s$ and $M$, respectively.
Based on $S_s$ (resp. $S$) we describe a converted solution $\convsol{S_s}$ (resp. $\convsol{S}$) to $M$ (resp. to $M_s$).

We now highlight a common property of both instances $M_s$ and $M$, which eases the description of the converted solutions.
Let us fix a single robot's path $\pi$.
In both $M_s$ and $M$, immediately before and after each gadget $\pi$ has a vertex that does not appear on other paths.
We call such a vertex a \emph{non-blocking vertex}.
Such a vertex can be easily created by inserting a row or column into the grid and "stretching" paths accordingly.

Before providing the implementation of each gadget, we outline how we convert between solutions of the two instances.
Both converted solutions $\convsol{S_s}$ and $\convsol{S}$ will consist of \emph{movement steps}, where after each step we have the invariant that all the primary robots are at non-blocking vertices.
Each step maintains the invariant by moving only one primary robot $r$ through a gadget to a non-blocking vertex.
Due to the invariant, $r$ cannot collide with any other primary robot during the step.
Therefore, to prove that the converted solutions are valid, it suffices to consider each gadget locally as follows.
For $\convsol{S}$ proving validity only requires verifying that each gadget is traversed in the correct order. %
As for $\convsol{S_s}$, it suffices show that collisions do not occur inside a gadget (as they cannot occur elsewhere).
We note that it is easy to verify that the invariant holds before any robot moves.

The movement steps in the converted solutions are initiated as follows.
For $\convsol{S_s}$, whenever a robot $r$ leaves a gadget vertex $\G$ in $S_s$, we initiate a movement step in which $r$ traverses corresponding gadget $M(\G)$ in $M$ to the nearest non-blocking vertex.
As part of the step, secondary robots may also be moved.
For $\convsol{S}$, since gadgets are not single vertices, we define \textit{traversal edges} whose traversal by a robot signifies that the robot has traversed the gadget.
More precisely, when a robot $r$ crosses such an edge in a gadget $M(\G)$, we move $r$ through the corresponding gadget $\G$ in $M_s$ (also to the nearest non-blocking vertex after $\G$).
In each implemented gadget, each primary robot has its own traversal edge, which we mark using a zigzag line.

We are now ready to present the gadgets.
Following the above discussion, for each gadget type we first provide the movement steps in $\convsol{S_s}$.
Then, we show that $\convsol{S}$ results in a correct traversal order in $M_s$.
For this direction of the conversion, when we say that a robot $r$ traverses a gadget in $M$, we refer to $r$ crossing its marked traversal edge.

\begin{figure}[H] \centering
    \includegraphics[page=3, scale=0.8]{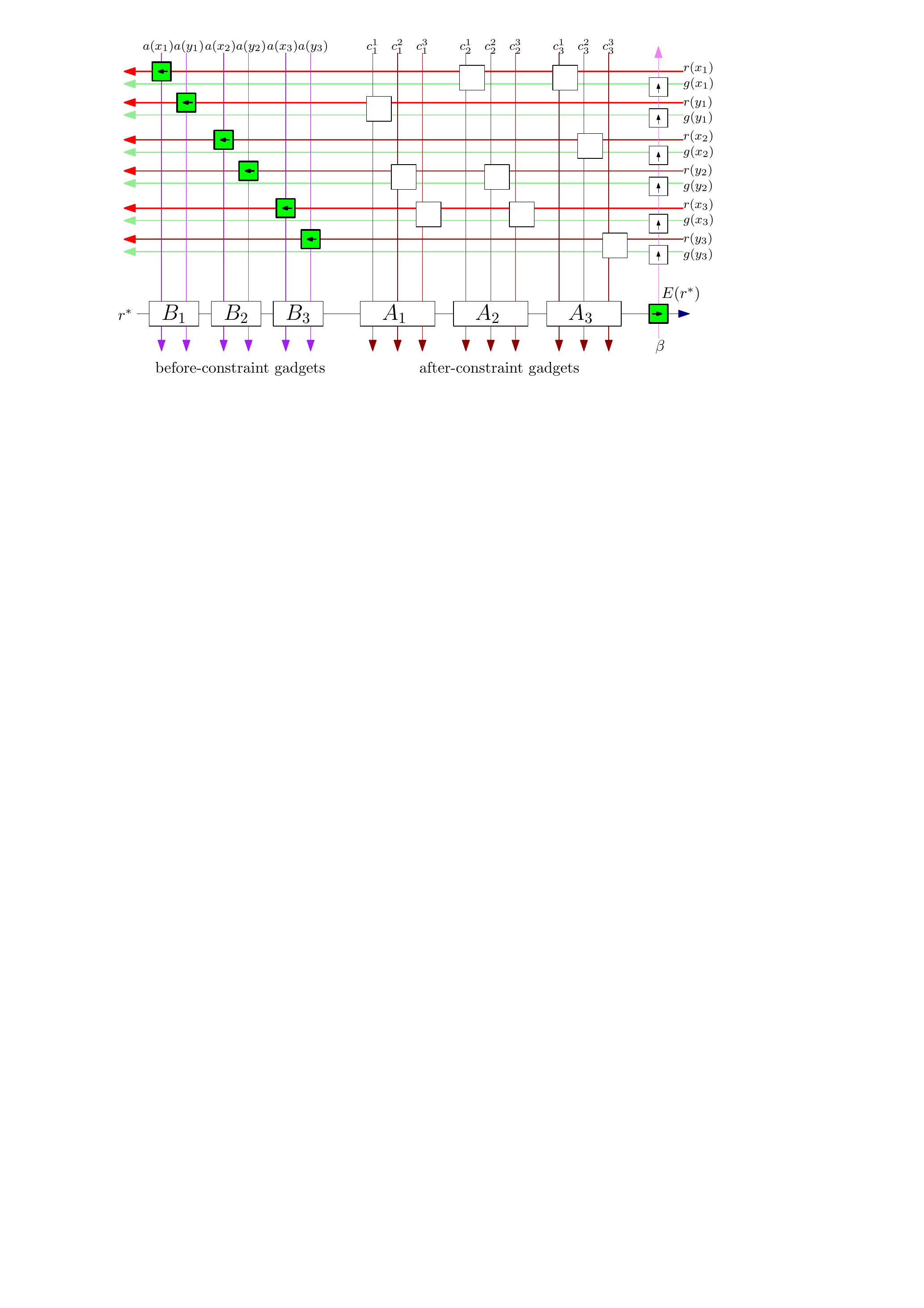}
    \caption{Implementation of a precedence gadget on the grid.}
    \label{fig:gadg:prec}
\end{figure}

\paragraph{Precedence gadget.}
Let $\G = \preced{r}{r'}$ be a precedence gadget.
Figure~\ref{fig:gadg:prec} shows $M(\G)$, the implementation of $\G$ on the grid.
The gadget can be easily rotated to accommodate paths that go left and up.
Recall that $r$ must traverse $\G$ before $r'$ in $S_s$.

In $\convsol{S_s}$ we have the corresponding steps:
(1) $r$ simply traversing $M(\G)$;
(2) $b$ and $a$ moving to their respective targets (in this order), followed by $r'$ traversing $M(\G)$.

For the other direction, observe that $b$ has to reach its target before $r'$ traverses $M(\G)$ in the solution $S$.
Therefore, $r'$ cannot traverse $M(\G)$ before $r$, as otherwise $\pi(r)$ will become blocked by $b$ at $\trg{b}$.
Therefore, we have a valid traversal order for $\G$ in $\convsol{S}$.

\begin{figure}[H] \centering
    \includegraphics[page =5, scale=0.8]{}
    \caption{Implementation of $B_i$ on the grid.}
    \label{fig:gadg:befor}
\end{figure}

\paragraph{Before-constraint gadget.}
Figure~\ref{fig:gadg:befor} shows $M(B_i)$, the implementation of $B_i$ on the grid.
Recall that in $M_s$ either $x_i$ or $y_i$ traverses $B_i$ before $r^*$ does.
We assume that a solution $S_s$ to $M_s$ exists and analyze two case, which depend on whether $x_i$ or $y_i$ traverses $B_i$ before $r^*$ (the case where both do is simpler).

Case 1: Suppose that the traversal order through $B_i$ in $S_s$ is $x_i,r,y_i$.
In $\convsol{S_s}$ we have the corresponding steps:
(1) $x_i$ simply traverses $M(B_i)$ first.
(2) We then move the following robots to their targets in order: $f,e,d, b$.
Then we move $c$ one edge (keeping $\trg{c}$ free for $y_i$ to pass through) and move $a$ to its target.
These motions clear $r^*$'s path inside $M(B_i)$, so we can then have it traverse $M(B_i)$.
(3) $y_i$ traverses $M(B_i)$.

Case 2: Suppose, on the other hand, that the traversal order through $B_i$ in $S_s$ is $y_i,r^*,x_i$.
In $\convsol{S_s}$ we have the corresponding steps:
(1) After $y_i$'s traversal, (2) we move $c$ and $a$ to their respective targets (in this order).
Then, move $b$ one edge forward, which allows $r^*$ to traverse $M(B_i)$.
(3) $x_i$ traverses $M(B_i)$.

In all cases, after $r,x_i,y_i$ traverse $M(B_i)$, any secondary robots belonging to $M(B_i)$ that are still not at their respective targets can be easily moved there. %

In the other direction, let us assume for a contradiction that $r^*$ traverses $M(B_i)$ before $x_i$ and $y_i$.
It is not hard to verify using case analysis that either $b$ or $c$ must reach its target before $r^*$ traverses $M(B_i)$.
In the former case, further observation reveals $f$ must reach its target as well (before $r^*$ traverses $M(B_i)$).
Since $\trg{f}$ lies on $\pi(x_i)$ and $\trg{c}$ lies on $\pi(y_i)$, one of the latter paths must become blocked.
Therefore, either $x_i$ or $y_i$ would not be able to reach its target, which is a contradiction.
Therefore, converting $S$ yields a valid traversal order for $B_i$ in $\convsol{S}$.

\begin{figure}[H] \centering
    \includegraphics[scale=0.8]{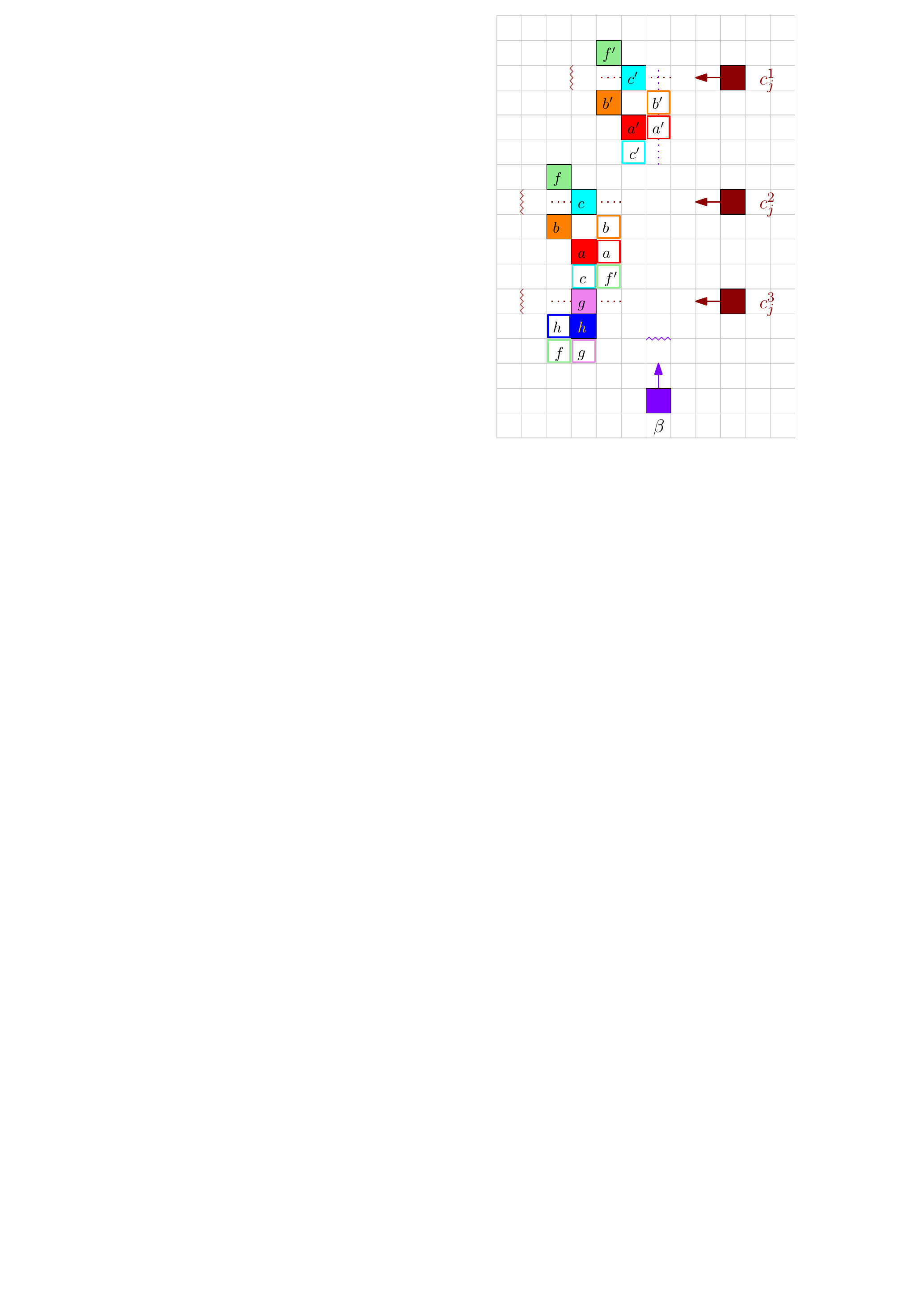}
    \caption{Implementation of $A_j$ on the grid.}
    \label{fig:gadg:after}
\end{figure}

\paragraph{After-constraint gadget.}
Figure~\ref{fig:gadg:after} shows $M(A_j)$, the implementation of $A_j$ on the grid.
Recall that in $M_s$ one of the checker robots that pass through $A_j$ must traverse $A_j$ after $\rblocker{}$ does.
Let us assume $M_s$ has a solution in which there is only one checker robot $r$ that traverses $A_j$ after $\rblocker{}$.
We assume so because the cases where multiple robots traverse $A_j$ after $\rblocker{}$ are simpler.
Note that in the cases below we can easily switch the order of the two checker robots traversing $A_j$ first, hence the first two movement steps are described together.

Case 1:
The traversal order through $A_j$ in $S_s$ is $c_j^1, c_j^2, \rblocker{}, c_j^3$.
In $\convsol{S_s}$ we have the following corresponding steps:
(1-2) Move both $c$ and $c'$ one edge.
These moves clear the paths of $c_j^2$ and $c_j^1$ through $M(A_j)$, which allows them to traverse $M(A_j)$.
(3) Then we let $\rblocker{}$ traverse $M(A_j)$.
(4) Finally, we clear the path for $c_j^3$, after which $c_j^3$ traverses $M(A_j)$, by moving the following robots to their targets in this order:
$a',c',b',f',a,c,b,f,h,g$.

Case 2:
The traversal order through $A_j$ in $S_s$ is $c_j^1, c_j^3, \rblocker{}, c_j^2$.
In $\convsol{S_s}$ we have the following corresponding steps:
(1-2) Move $c'$ and $b$ one edge.
Then move the following robots to their targets in the specified order: $f,h,g$.
These moves clear the paths of $c_j^3$ and $c_j^1$ through $M(A_j)$, which allows them to traverse $M(A_j)$.
(3) Then we let $\rblocker{}$ traverse $M(A_j)$.
(4) Finally, we clear the path for $c_j^2$, after which $c_j^2$ traverses $M(A_j)$, by moving the following robots to their targets in this order: $a',c',b',f',a,b,c$.

Case 3:
The traversal order through $A_j$ in $S_s$ is $c_j^2, c_j^3, \rblocker{}, c_j^1$.
In $\convsol{S_s}$ we have the following corresponding steps:
(1-2) Move $b'$ one edge.
Then move the following robots to their targets in the specified order: $f', a, b, c, f, h, g$.
These moves clear the paths of $c_j^3$ and $c_j^2$ through $M(A_j)$, which allows them to traverse $M(A_j)$.
(3) Then we let $\rblocker{}$ traverse $M(A_j)$.
(4) Finally, we clear the path for $c_j^1$, after which $c_j^1$ traverses $M(A_j)$, by
moving $a',b',c'$ to their targets in the specified order.

In the other direction, let us assume for a contradiction that all the checker robots of $A_j$ traverse $M(A_j)$ before $\rblocker{}$ does.
We observe the moves that must happen before the checker robots traverse $M(A_j)$.

\begin{lemma} \label{lem:hardness:before-c1-c2}
$b'$ must move one edge before both $c_j^2$ and $c_j^3$ cross their respective traversal edge in $M(A_j)$.
\end{lemma}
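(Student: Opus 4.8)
The plan is to treat this as a statement purely about the internal mechanics of the single gadget $M(A_j)$ depicted in \Cref{fig:gadg:after}. As the surrounding discussion establishes, between consecutive movement steps every primary robot sits on a non-blocking vertex, so the only relevant interactions happen among the secondary robots $a,b,c,f,g,h,a',b',c',f'$ of $M(A_j)$ and the three checkers; in particular I may reason about the configuration inside $M(A_j)$ in isolation. Following the context of the contradiction argument (all three checkers cross $M(A_j)$ before $\rblocker$ traverses it), I would establish the lemma as the local claim that \emph{the cells that $c_j^2$ and $c_j^3$ must enter in order to cross their marked (zigzag) traversal edges cannot both be vacated while $b'$ occupies its initial cell}; this immediately yields that $b'$ must move before both crossings are complete. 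The first step is therefore to read off from the figure, for each of $c_j^2$ and $c_j^3$, the cell it enters to cross its edge and the secondary robot initially blocking that cell.

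The core of the argument is to show that the two freeing sequences for these cells conflict unless $b'$ moves first. I expect that each cell can be cleared on its own by a short sequence of secondary moves that does not touch $b'$ — precisely what happens for $c_j^2$ inside the $c,c'$ moves of Case~1, and for $c_j^3$ inside the $c',b,f,h,g$ moves of Case~2 — but that the two sequences compete for a common cell, so that clearing \emph{both} requires $b'$ to vacate its start cell and thereby enable the combined cascade $b',f',a,b,c,f,h,g$ of Case~3. Because motion in $M$ is unidirectional and $\VM = 2$, every secondary robot has a single prescribed outgoing edge and can advance only when its unique next cell is empty, so the admissible move sequences are essentially forced. I would make this forced-move structure explicit and argue by contradiction: if $c_j^2$ and $c_j^3$ had both crossed while $b'$ were still on its start cell, then both entry cells would have been vacated without $b'$ moving, which the conflict just described rules out. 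Hence at most one of the two checkers can have crossed, and $b'$ must have moved before both crossings complete.

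The step I expect to be the main obstacle is pinning down the shared resource and verifying that the two otherwise independent shortcuts genuinely conflict. The lemma is false if one asks only about a single checker, since either one \emph{alone} can be let through without disturbing $b'$; the content lies entirely in the word \emph{both}. Establishing the conflict requires a finite but careful interleaving analysis over the fixed gadget layout, checking that no ordering of the secondary robots frees the two entry cells simultaneously while $b'$ stays put. It is here that the unidirectional, degree-two structure of the gadget does the real work, since it collapses the space of admissible move sequences down to the single $b'$-gated cascade, leaving only a small number of interleavings to inspect.
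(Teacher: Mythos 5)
You have correctly located where the content of the lemma lies---either checker alone can be admitted without touching $b'$, so the force of the statement is entirely in the conjunction---and your general strategy (a purely local contradiction argument inside $M(A_j)$, relying on the non-blocking-vertex invariant to isolate the gadget) matches the paper's. But what you have written is a plan rather than a proof: the step you yourself flag as ``the main obstacle,'' namely identifying the shared resource and verifying that the two freeing sequences genuinely conflict, is precisely the entire content of the lemma, and you leave it unexecuted. Without that step there is nothing to check.

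Moreover, the mechanism you conjecture---a single cell that both entry sequences compete for, with $b'$ directly gating its vacating---is not how the gadget works. The paper's argument is a cascade of precedence constraints enforced by \emph{blocking targets}: to let $c_j^3$ cross, $g$ must move, hence $h$ must move, hence $f$ must first reach its target (else it is blocked at $\trg{h}$), hence $b$ must move one edge; to let $c_j^2$ cross, $c$ must move one edge; the union of these forced moves makes either $a$ or $b$ reach its target; that target lies on $\pi(f')$, so $f'$ must have advanced past $\src{b'}$ \emph{even earlier}; and that move is what requires $b'$ to have vacated its start vertex. So $b'$ does not gate the checkers' entry cells at all---it gates a move of $f'$ that must precede a downstream consequence of admitting both checkers. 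Your proposal also attributes the ``forcedness'' of the move sequences to unidirectionality and $\VM=2$, whereas the device actually doing the work is that secondary robots' targets sit on other secondary robots' paths; this is the feature the whole MRMP-GP-UNI construction is built on, and it is absent from your outline.
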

\begin{proof}
Before $c_j^3$ traverses $M(A_j)$, $g$ and consequently $h$ must move one edge.
Before these moves happen, $f$ must reach its target, as otherwise it will be blocked by $h$ at $\trg{h}$.
Therefore, $b$ has to move one edge before $c_j^3$ traverses $M(A_j)$, to let $f$ reach its target.
Clearly, $c$ has to move one edge before $c_j^2$ traverses $M(A_j)$.
Putting the last two facts together, simple case analysis reveals that either $a$ or $b$ must reach its target before $c_j^2$ and $c_j^3$ traverse $M(A_j)$.
Therefore, even earlier, $f'$ must advance past $\src{b'}$ in order to not get blocked by $a$ or $b$.
We get the desired result, since $b'$ must move one edge to let $f'$ perform the latter move.
\end{proof}

Now observe that $c'$ has to move one edge before $c_j^1$ traverses $M(A_j)$.
We now combine the last fact with \Cref{lem:hardness:before-c1-c2}, and obtain (by similar case analysis as for $a$ and $b$ above) that either $a'$ or $b'$ must reach its target before all the checker robots traverse $M(A_j)$.
This is a contradiction, since then $\rblocker{}$'s path through $M(A_j)$ will be blocked by the robot of $a','b$ that reaches its target.
Therefore, converting $S$ yields a valid traversal order for $A_j$ in $\convsol{S}$.

\begin{figure*}
  \centering
  \includegraphics[width=0.65\textwidth]{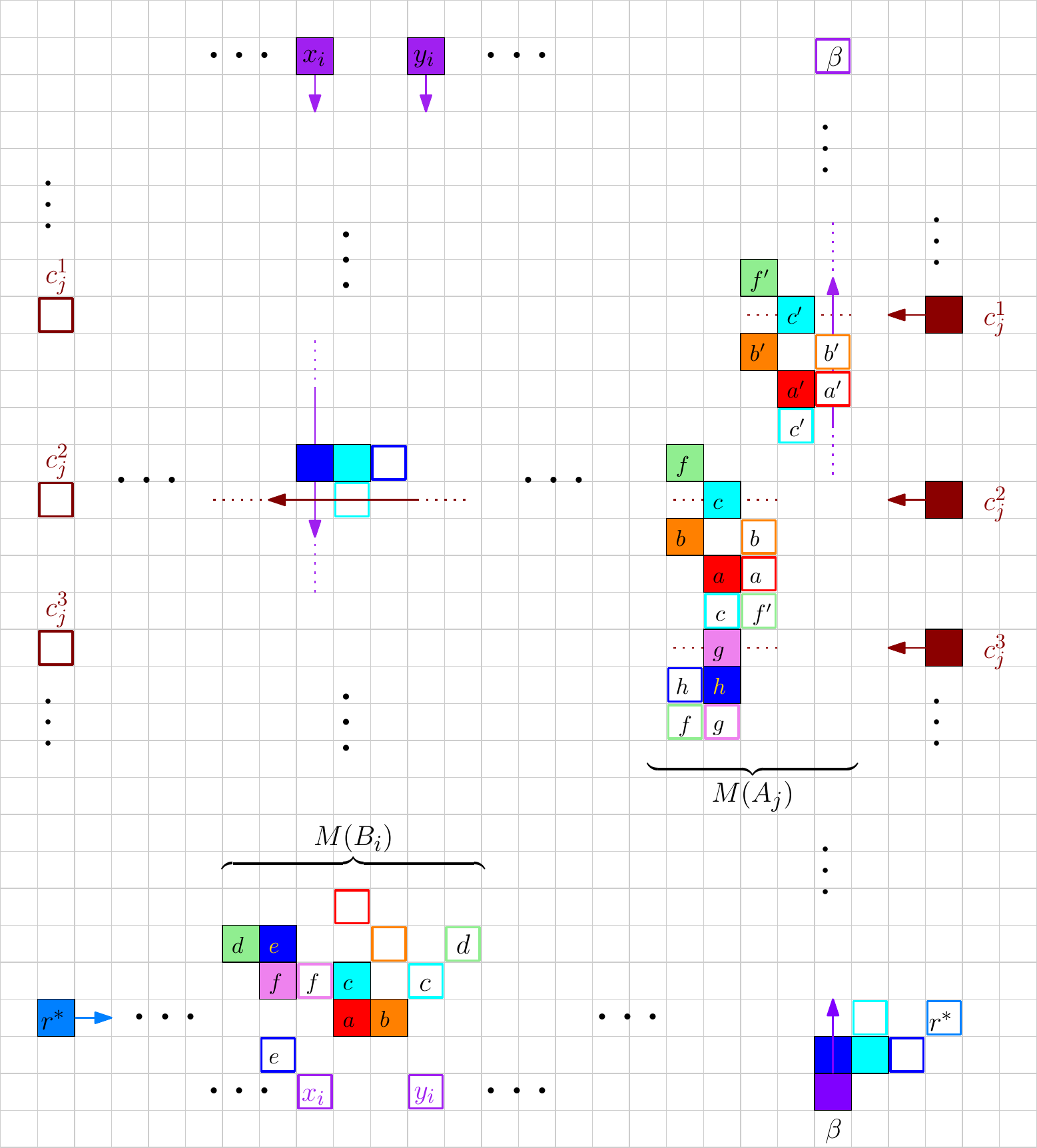}
  \caption{An illustration of $M$ showing the relative placement of the gadgets. Colored filled squares are starts and and unfilled squares are targets.}
  \label{fig:hardness:full-uni}
\end{figure*}

We conclude the conversion of $M_s$ to $M$ by observing that within each gadget of $M$ we have a vertex multiplicity of 2. Since that is also the case outside of gadgets, then $M$ is indeed an instance of MRMP-GP-UNI(2,0), as required.
Finally, it can be verified that the reduction can be completed in polynomial time.
Therefore, we obtain the following.
\begin{lemma}
MRMP-GP-UNI(2,0) is NP-complete.
\end{lemma}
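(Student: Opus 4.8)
The plan is to prove the two halves of NP-completeness separately. Membership in NP is inherited from MRMP-GP: since each robot only ever advances along its fixed path, any motion plan uses at most $L=\sum_{r}|\pi(r)|$ moves, which is polynomial in the encoding, and each move is checkable in constant time, so a motion plan is a polynomial certificate. I would then spend all the effort on NP-hardness, reducing from MRMP-GP-PC, which is NP-complete by the earlier lemma. The reduction is exactly the construction of the grid instance $M$ from the MRMP-GP-PC instance $M_s$ by replacing each gadget vertex $\mathcal{G}$ with its on-grid implementation $M(\mathcal{G})$; the remaining obligation is therefore to show that this construction is correct and polynomial.

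The heart of the argument is the equivalence ``$M_s$ is solvable if and only if $M$ is solvable,'' which I would establish through the two converted solutions $\convsol{S_s}$ and $\convsol{S}$. The key device is the non-blocking-vertex invariant: both conversions proceed in movement steps, and after each step every primary robot sits on a vertex appearing on no other primary robot's path. I would first observe that this invariant makes the verification entirely local, since a primary robot can meet another primary robot only inside a gadget. Thus, for the forward direction it suffices to check that within each gadget type the secondary robots can be shuffled so as to clear a primary robot's path in precisely the order prescribed by $S_s$, and, for the backward direction, that crossing a marked traversal edge in $M(\mathcal{G})$ forces the traversal order required by $\mathcal{G}$. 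I would then dispatch the three gadget types --- precedence, before-constraint, and after-constraint --- one at a time, giving the explicit step sequences for $\convsol{S_s}$ and the forcing arguments for $\convsol{S}$ (as in the case analyses above, invoking \Cref{lem:hardness:before-c1-c2} for the after-constraint gadget).

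Finally I would close the reduction by checking that $M$ is a legitimate MRMP-GP-UNI(2,0) instance --- every path lies in a single row or column, vertex multiplicity is $2$ both inside and outside the gadgets, and no edge is traversed in both directions --- and that $M$ has size linear in $n+m$ and is built in polynomial time, since each gadget has constant size and there are $O(n+m)$ of them. Combining membership in NP with this polynomial-time, solvability-preserving reduction from the NP-complete problem MRMP-GP-PC yields the lemma. I expect the main obstacle to be the backward direction for the after-constraint gadget $A_j$: proving that $S$ cannot route all three checker robots through $M(A_j)$ before $\rblocker$ requires chaining the forced moves of the secondary robots (culminating in one of $a',b'$ reaching its target and thereby blocking $\rblocker$), and getting this cascade exactly right --- rather than the comparatively mechanical precedence and before-constraint gadgets --- is where the care is needed.
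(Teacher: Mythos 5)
Your proposal follows the paper's proof essentially verbatim: membership in NP via the polynomial bound on the number of moves, then a solvability-preserving reduction from MRMP-GP-PC that implements each gadget vertex of $M_s$ on the grid, with correctness argued locally via the non-blocking-vertex invariant, the traversal edges, and the per-gadget case analyses (including the forced-move cascade for the after-constraint gadget). This is the same decomposition and the same key devices as the paper, so there is nothing to add beyond filling in the explicit gadget figures and step sequences.
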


\subsection{Hardness of MRMP-GP-NBT(3)}
The next stage of our reduction, towards the NP-hardness of MRMP-GP-NBT(3), is to convert $M$ to an instance $M'$ of MRMP-GP-NBT(4,1), where $W$ remains a 2D grid.
This stage requires the following mostly local changes:
Let $r,r'$ be two robots, where $\trg{r}$ is a blocking target located at vertex $v$, where $ v \in \pi(r')$.
We extend $\pi(r)$ so that it runs along $\pi(r')$ in the opposite direction, until just after passing $\src{r'}$.
\Cref{fig:block_events} illustrates this main modification, by showing how it is performed on example (a), which yields example (b).
This modification preserves the precedence constraints among the robots, with regard to the traversal order at the vertex marked by a cross.

\begin{figure}[H]
\centering
\begin{subfigure}[b]{0.35\textwidth}
        \centering
        \includegraphics[width=0.45\textwidth]{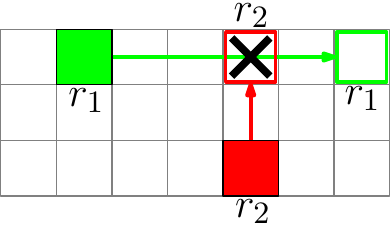} 
        \caption{}
\end{subfigure}
\begin{subfigure}{0.35\textwidth}
        \centering
        \includegraphics[width=0.45\textwidth]{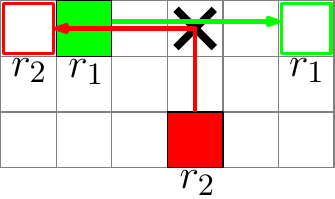} 
        \caption{}
\end{subfigure}

\caption{Two scenarios in which $r_1$ must visit the vertex marked by a cross before $r_2$.}
\label{fig:block_events}
\end{figure}

The last stage of our proof involves a conversion of $M'$ to an instance $M''$ of MRMP-GP-NBT(3).
This is done by removing vertices with VM=4 from $M'$.
It can be verified that such a change can be done whilst preserving the property that $M'$ is solvable if and only if $M''$ is solvable, as required.

\end{document}